\newtheorem{prop}{Proposition}[section]
\newtheorem{lemma}{Lemma}[section]
\newtheorem{theorem}{Theorem}[section]
\newtheorem{remark}{Remark}[section]
\newcommand{\X}{\mathbf{X}}
\newcommand\indep{\protect\mathpalette{\protect\independenT}{\perp}}
\def\independenT#1#2{\mathrel{\rlap{$#1#2$}\mkern3mu{#1#2}}}
\newcommand\norm[1]{\left\lVert#1\right\rVert}
\begin{document}

\twocolumn[

\aistatstitle{Rank-Based Causal Discovery for Post-Nonlinear Models}

\aistatsauthor{ Grigor Keropyan \And David Strieder \And  Mathias Drton }

\aistatsaddress{
Technical University of Munich \And 
Technical University of Munich \\ Munich Center for Machine Learning  \And 
Technical University of Munich \\ \ Munich Center for Machine Learning  } ]

\begin{abstract}
    Learning causal relationships from empirical observations is a central task in scientific research. A common method is to employ structural causal models that postulate noisy functional relations among a set of interacting variables. To ensure unique identifiability of causal directions, researchers consider restricted subclasses of structural causal models.  Post-nonlinear (PNL) causal models constitute one of the most flexible options for such restricted subclasses, containing in particular the popular additive noise models as a further subclass.
    However, learning PNL models is not well studied beyond the bivariate case. The existing methods learn non-linear functional relations by minimizing residual dependencies and subsequently test independence from residuals to determine causal orientations.  However, these methods can be prone to overfitting and, thus, difficult to tune appropriately in practice.  As an alternative, we propose a new approach for PNL causal discovery that uses rank-based methods to estimate the functional parameters.  This new approach exploits natural invariances of PNL models and disentangles the estimation of the non-linear functions from the independence tests used to find causal orientations.  We prove consistency of our method and validate our results in numerical experiments.
\end{abstract}

\section{INTRODUCTION}
\label{section:introduction}

Discovering the causal structure of complex systems is an important question in various disciplines such as biology, economics, clinical medicine, or neuroscience  \citep{biology_ref, review_ref, oxford_ref}. The gold standard approach to exploration of causal relations is to perform controlled experiments in which researchers externally intervene in the system and observe the resulting changes to variables of interest. However, in many applications controlled experiments are not feasible due to high cost or for ethical reasons.  In such cases, causal discovery based on only observational data can be a useful tool \citep{randomized_experiments_ref}. 

A common tool for modeling causal relations are Structural Equation Models (SEMs). In their general form, SEMs postulate noisy functional relationships between a set of interacting variables. In the fully general setting, causal discovery methods such as constraint-based and score-based methods can identify the underlying causal structure only up to Markov equivalence classes \citep{Spirtes2000}. Thus, the literature has also considered many restricted subclasses that enable unique identification \citep{global_idf_linear_gaussian, hoyer_2009, ANM_2011, idf_bivariate}.  In this realm, post-nonlinear (PNL) causal models constitute one of the most general approaches.  They are  identifiable from the joint distribution under mild assumptions  \citep{idf_bivariate, ANM_2011} and yet offer a rather flexible framework for  modeling complex non-linear causal systems. 

Existing methods for bivariate PNL causal discovery are based on estimating the functional relations by minimizing independence criteria (HSIC, mutual information, etc.) between the noise and potential parents in a first step, and performing independence tests to determine the causal structure in a second step \citep{idf_bivariate, pnl_bv_2020}. However, minimizing dependence to subsequently test for independence leads to potential  overfitting and thus limits the PNL approach. Another approach considered by \citet{pnl_optimal_transport_2022} employs Optimal Transport theory for bivariate post-nonlinear and additive noise causal discovery, but it is not evident how to generalize their method to multivariate models. 
To our knowledge the only work that deals with multivariate PNL models is \cite{pnl_mult_2022}, where the authors generalize the bivariate method from \cite{pnl_bv_2020} based on minimizing dependence and subsequently testing for independence. 

In this article we present a new method for multivariate causal discovery in PNL models that disentangles the two tasks of learning the functional relations and learning the causal structure.  Our method continues to learn the latter with the help of independence tests, but it employs  rank-based methods to learn the functional relations and, in this way, avoids overfitting issues.

The remainder of the paper is organized as follows. In Section \ref{section:pnl_regression} we introduce the PNL rank regression methods that we use to learn the functional relations. We study the special case of linearity in the inner function and show consistency of our proposed rank-based functional parameter estimates. This special case includes general nonlinear functions using basis expansions. In Section \ref{section:learn_pnl} we discuss the causal order learning routine using our proposed rank-based estimates in a recursive process that finds sink nodes by independence testing. Furthermore, we show consistency of the causal order estimation for PNL models and present the results of a simulation study in Section \ref{section:simulations}, where we compare our method to existing causal learning methods. 
Section \ref{section:conclusion} concludes the paper.

\section{PNL RANK REGRESSION}
\label{section:pnl_regression}

In this section we introduce the rank-based estimators that we use in the first step of our proposed causal learning algorithm. The goal is to employ these rank-based estimators to infer the functional relations among the variables and to obtain estimates of the stochastic noise terms in the model. In the second step, we use the estimated noise terms to test for independence. 

Suppose we observe a sample of $n$ independent copies $(X_1,Y_1), \dots , (X_n, Y_n)$ of a random vector $(X,Y)$. We assume the data generating process follows a PNL model, that is, the response variable  $Y \in \mathbb{R}$ is given by
\begin{equation}
    \label{model:pnl_regression}
    h(Y) = g(X) + \varepsilon,
\end{equation} 
where $X \in \mathbb{R}^m$ is a continuous random vector and the stochastic error term $\varepsilon$ has mean zero with unknown continuous distribution, independent of $X$. Furthermore, we assume that the function $h : \mathbb{R} \to \mathbb{R}$ is continuous and strictly increasing (thus, invertible) whereas $g : \mathbb{R}^m \to \mathbb{R}$ may be an arbitrary function.

Under similar assumptions,  \citet{idf_bivariate} suggested to estimate the noise $\varepsilon = h(Y) - g(X)$ by representing $h$ and $g$ with Multi-layer Perceptrons (MLPs) and minimizing mutual information with $X$ via gradient-based methods. However, the main drawback of their methodology is that this model can fit perfectly to any data by learning constant functions $h$ and $g$. The estimated noise will be constant and thus always independent from $X$. 

To overcome this problem \cite{pnl_bv_2020} implemented an additional auto-encoding structure in the minimization problem that enforces invertibility of the function $h$. While this circumvents the problem of constant estimation of the function $h$, there are further challenges 
that arise from minimizing dependence and subsequent testing for independence.  Indeed,  
the complexity of the function class assumed for $g$ needs to be balanced very carefully with the available sample size.  Otherwise, $g$ can be fitted perfectly such that $g(X) = h(Y)$, in which case the functional estimates cancel and the estimated noise is always independent of $X$. \cite{pnl_bv_2020} used a fixed architecture for the function classes of $g$ and $h$.  As a result, especially for small sample sizes (compared to the complexity of the function class of $g$), their method is prone to overfitting and canceling the effect of the function $h$. 
Such overfitting may then entail erroneous results in independence tests for causal structure learning. 



To avoid the noted overfitting issues, we propose the following two-stage method to learn the functional relations.    In the first stage, we leverage rank statistics to separately estimate the function $g$, without any appeal to measures of dependence between the noise and the predictor $X$.  The strictly increasing function $h$ preserves the ranks of $Y$ and thus, using rank-based methods, we can avoid estimating $h$ at this stage.  This circumvents the problem of $g$ matching $h$. In the second step, we estimate the functional relation $h$ at all observed data points to obtain the required estimates of the noise.

In order to simplify the concept and a theoretical analysis of our proposed method, we assume in the following linearity of the function $g$, i.e. $g(X) = X^T \beta_0$ for $\beta_0 \in \mathbb{R}^m$. This can also be seen as a first order Taylor approximation of an arbitrary functional relation.

\begin{remark}
\label{remark:pnl_regression_nonlinear}
Our framework and the idea of disentangling learning and testing by employing rank-based objective functions can be easily extended to the nonlinear case. By employing basis expansions, MLPs or any parametric function class in combination with the proposed rank-based scores to learn the functional relations one can trace the steps of the presented linear case. For instance, consider the basis functions $\{ b_l(\cdot) : l = 1, \dots, a_n \}$, where $a_n \to \infty$ sufficiently slowly, similar to \cite{CAM_2014}. Then we can represent the (nonlinear) function $g$ by $\sum_{l = 1}^{a_n} \alpha_l b_l(\cdot)$, where $\alpha_l \in \mathbb{R}$ for all $l=1,\dots, a_n$, and employ our proposed framework. The simulations in Section \ref{section:simulations} include an example.
\end{remark}

 We start by studying the special case of model \eqref{model:pnl_regression} under the assumption of Gaussian noise and derive a computationally efficient algorithm for estimating the functional parameters. Further, in Subsection \ref{subsection:ranks} we consider the general case without restricting the noise distribution. The main idea of our approach is to leverage rank likelihoods, however, using the full marginal rank likelihood is not computationally tractable. A common approach to circumvent calculating the full marginal rank likelihood is to employ approximate Monte Carlo methods, e.g., considered by \cite{doksum87}. However, we observed that this approach does not work well in practice for values of $\beta_0$ larger than one. Thus, in our proposed framework we employ pairwise rank likelihoods to approximate the full marginal rank likelihood (in the Gaussian case) or the rank correlation function (in the general noise case).

\subsection{Gaussian Case}
\label{subsection:rankg}

We assume that the data generating process follows the model
\begin{equation}
    \label{model:pnl_regression_linear}
    h(Y) = X^T \beta_0 + \varepsilon,
\end{equation}
with some unknown $\beta_0 \in \mathbb{R}^m$. Furthermore, we assume that the noise $\varepsilon$ is standard normal distributed and propose the following computationally fast algorithm to estimate the functional relations. The idea of this method is based on \cite{lin_tr_models_prl}.

We exploit the fact that $h$ is a strictly increasing function and therefore preserves the ranks of $\{Y_i\}_{i=1}^n$. The normality assumption yields $\varepsilon_i - \varepsilon_j \sim \mathcal{N}(0, 2)$ and we obtain
\begin{align*}
    \mathbb{P}(Y_j > Y_i &| X_j, X_i) = \mathbb{P}(h(Y_j) > h(Y_i) | X_j, X_i) \\ 
    &= \mathbb{P}(\varepsilon_i - \varepsilon_j < (X_j - X_i)^T \beta_0 | X_j, X_i) \\
    &= \Phi \left(\tfrac{(X_j - X_i)^T \beta_0}{\sqrt{2}} \right),
\end{align*}
where $\Phi$ is the cumulative distribution function of the standard normal distribution. The normalized log pairwise rank likelihood function is then given by
\begin{align}
\label{eqn:log_prl_objective}
    \ell_{prl} (\beta) := & 
    \binom{n}{2}^{-1} \sum_{i < j} \mathds{1}\{Y_j > Y_i\} \log \Phi \left(\tfrac{(X_j - X_i)^T \beta}{\sqrt{2}} \right) \nonumber \\
    &+ \mathds{1}\{Y_j \leq Y_i\} \log \Phi \left(\tfrac{(X_i - X_j)^T \beta}{\sqrt{2}} \right).
\end{align}
We estimate $\beta_0$ by maximizing $\ell_{prl}$, that is,
\begin{equation*}
    \hat{\beta}_{prl} := \underset{\beta \in \mathbb{R}^m}{ \arg\max } \; \ell_{prl}(\beta).
\end{equation*}
This defines a concave optimization problem, which leads to a computationally fast estimation routine for $\beta_0$ without precise knowledge or estimation of the function $h$.
\begin{prop}
\label{prop:prl_concave}
The log pairwise rank likelihood function $\ell_{prl}(\beta)$ defined in \eqref{eqn:log_prl_objective} is concave. Moreover, if we assume that $n > m$, then $\ell_{prl}(\beta)$ is strictly concave.
\end{prop}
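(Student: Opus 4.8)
The plan is to exploit the fact that $\ell_{prl}$ is a nonnegatively weighted sum of terms of the form $\log\Phi(a^T\beta)$, and that $\log\Phi$ is a strictly concave function; concavity then follows from standard composition and summation rules, while strict concavity requires an additional rank condition on the design that $n>m$ guarantees almost surely.

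First I would establish the key analytic fact that $\log\Phi$ is strictly concave, i.e.\ $(\log\Phi)''(t)<0$ for every $t\in\mathbb{R}$. Writing $\phi=\Phi'$ for the standard normal density and using $\phi'(t)=-t\phi(t)$, a direct computation gives
$$(\log\Phi)''(t) = -\frac{\phi(t)\,[\,t\Phi(t)+\phi(t)\,]}{\Phi(t)^2},$$
so the claim reduces to showing $t\Phi(t)+\phi(t)>0$ for all $t$. For $t\ge 0$ this is immediate since both summands are nonnegative and $\phi(t)>0$. The nontrivial regime is $t<0$: setting $t=-s$ with $s>0$, the inequality becomes $\phi(s)>s\,(1-\Phi(s))$, a Mills-ratio bound. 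I would prove it by introducing $R(s):=\phi(s)-s\,(1-\Phi(s))$, computing $R'(s)=-(1-\Phi(s))<0$ so that $R$ is strictly decreasing, and noting $R(s)\to 0$ as $s\to\infty$; hence $R(s)>0$ for every finite $s\ge0$, which is exactly the desired inequality. This step, although short, is the analytic crux of the argument.

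Given strict concavity of $\log\Phi$, concavity of $\ell_{prl}$ is routine: for each pair $i<j$ the map $\beta\mapsto\log\Phi\big((X_j-X_i)^T\beta/\sqrt2\big)$ is the composition of the concave function $\log\Phi$ with an affine map, hence concave (and likewise with $X_i-X_j$), and a sum with the nonnegative weights $\binom{n}{2}^{-1}$ and the indicators $\mathds{1}\{\cdot\}$ preserves concavity.

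For the strict concavity claim under $n>m$, I would compute the Hessian. Each summand contributes $(\log\Phi)''(\cdot)\,\tfrac12(X_j-X_i)(X_j-X_i)^T$ (the sign of $X_j-X_i$ is irrelevant after forming the outer product), so
$$\nabla^2\ell_{prl}(\beta) = \binom{n}{2}^{-1}\sum_{i<j} (\log\Phi)''\!\big(\pm\tfrac{(X_j-X_i)^T\beta}{\sqrt2}\big)\,\tfrac12\,(X_j-X_i)(X_j-X_i)^T.$$
Since every scalar factor $(\log\Phi)''<0$, this matrix is negative definite if and only if the difference vectors $\{X_j-X_i:i<j\}$ span $\mathbb{R}^m$, equivalently the points $X_1,\dots,X_n$ are not confined to a proper affine subspace. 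Because $X$ has a continuous distribution and $n>m$ (so $n\ge m+1$), the sample points are in general position almost surely and thus affinely span $\mathbb{R}^m$; the differences then span $\mathbb{R}^m$, making the Hessian negative definite everywhere and $\ell_{prl}$ strictly concave. The only subtlety to flag is that strict concavity holds almost surely with respect to the design rather than deterministically, which matches the continuity assumption on $X$ in \eqref{model:pnl_regression}.
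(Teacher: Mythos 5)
Your proposal is correct and follows essentially the same route as the paper: your Mills-ratio inequality $t\Phi(t)+\phi(t)>0$ (proved via the strictly decreasing function $R$ with limit $0$) is exactly the paper's Lemma on $\phi'(z)\Phi(z)-\phi(z)^2<0$ under the substitution $z=-s$, and your Hessian/rank argument for strict concavity matches the paper's, which merely formalizes your "general position almost surely" step by citing Okamoto's lemma on zero sets of nonzero polynomials in the sample. You are also right to flag that strict concavity holds only almost surely over the design; the paper's own proof has the same (implicit) caveat.
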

The proof can be found in Appendix \ref{app:prop_concavity}.

Furthermore, the proposed estimator is consistent.

\begin{theorem} 
\label{thm:asm_prl}
As $n\to\infty$ (and in particular $n > m$), it holds that
 $$\hat{\beta}_{prl} - \beta_0 = o_P(1).$$
\end{theorem}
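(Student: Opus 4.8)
The plan is to treat $\hat\beta_{prl}$ as an M-estimator and follow the classical two-ingredient recipe: first identify a deterministic population objective that $\ell_{prl}$ approximates and show it is uniquely maximized at $\beta_0$, then transfer this to the sample maximizer via a convexity argument that exploits the concavity established in Proposition~\ref{prop:prl_concave}.

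First I would introduce the population objective. Since $\ell_{prl}(\beta)$ is a $U$-statistic of order two with symmetric kernel
\[
k\big((X_i,Y_i),(X_j,Y_j);\beta\big) = \mathds{1}\{Y_j>Y_i\}\log\Phi\!\Big(\tfrac{(X_j-X_i)^T\beta}{\sqrt2}\Big) + \mathds{1}\{Y_j\le Y_i\}\log\Phi\!\Big(\tfrac{(X_i-X_j)^T\beta}{\sqrt2}\Big),
\]
the law of large numbers for $U$-statistics yields, for each fixed $\beta$, the pointwise convergence $\ell_{prl}(\beta)\xrightarrow{P}\ell(\beta):=\mathbb{E}\,k\big((X_1,Y_1),(X_2,Y_2);\beta\big)$, where $(X_1,Y_1),(X_2,Y_2)$ are two independent copies, provided the kernel is integrable. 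Checking integrability is one technical point: as its argument tends to $-\infty$, $\log\Phi$ grows like a negative quadratic, so $|k|\le C(1+\|X_i-X_j\|^2\|\beta\|^2)$ and a moment condition such as $\mathbb{E}\|X\|^2<\infty$ guarantees $\ell(\beta)$ is finite.

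Next I would prove identifiability, i.e.\ that $\beta_0$ is the unique maximizer of $\ell$. Conditioning on $(X_1,X_2)$ and writing $u=\tfrac{(X_2-X_1)^T\beta}{\sqrt2}$ and $u_0=\tfrac{(X_2-X_1)^T\beta_0}{\sqrt2}$, the derivation preceding \eqref{eqn:log_prl_objective} gives $\mathbb{P}(Y_2>Y_1\mid X_1,X_2)=\Phi(u_0)=:p$, so the conditional expectation of the kernel equals $\psi_p(u):=p\log\Phi(u)+(1-p)\log\Phi(-u)$. This is exactly the expected Bernoulli log-likelihood with success probability $\Phi(u)$ under true probability $p$; since $\Phi$ is strictly increasing and log-concave, $\psi_p$ is concave with unique maximizer $u=u_0$. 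Hence the conditional expectation is maximized when $(X_2-X_1)^T\beta=(X_2-X_1)^T\beta_0$, and taking expectations gives $\ell(\beta)\le\ell(\beta_0)$ with equality iff $(X_2-X_1)^T(\beta-\beta_0)=0$ almost surely. A non-degeneracy assumption on $X$ (positive definite covariance, equivalently support not contained in a hyperplane) then forces $\beta=\beta_0$.

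Finally I would transfer the unique maximization of the limit to the sample maximizer. Because each $\ell_{prl}$ is concave by Proposition~\ref{prop:prl_concave} and converges pointwise in probability to the concave limit $\ell$ whose unique maximizer is $\beta_0$, the standard convexity argument for M-estimation applies: pointwise convergence of concave functions upgrades automatically to uniform convergence on compacta, which together with a well-separated unique maximizer yields $\arg\max\ell_{prl}\xrightarrow{P}\arg\max\ell$, i.e.\ $\hat\beta_{prl}-\beta_0=o_P(1)$. I expect the main obstacle to be combining the integrability/tail control of the first step with ruling out the maximizer escaping to infinity; the concavity from Proposition~\ref{prop:prl_concave} is precisely what makes this clean, since it removes any need for a separate compactness or stochastic-equicontinuity argument over the whole parameter space.
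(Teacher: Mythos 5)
Your proof is correct (granting the moment condition $\mathbb{E}\norm{X}_2^2<\infty$ that you flag; the paper's own integrability claims implicitly need the same, since its score kernel grows like $\norm{X_i-X_j}^2$ in the tails), but it takes a genuinely different route from the paper. The paper never introduces the population objective $\ell(\beta)$ and never proves identifiability. Instead it argues locally at $\beta_0$: in Lemma \ref{lemma:grad_prl_lik_zero_as} the score $\nabla_\beta\ell_{prl}(\beta_0)$ is recognized as a U-statistic whose kernel has mean zero---conditioning on $(X_i,X_j)$ and using $\mathbb{P}(Y_j>Y_i\mid X_i,X_j)=\Phi\bigl((X_j-X_i)^T\beta_0/\sqrt{2}\bigr)$ makes the two indicator terms cancel---so the U-statistic SLLN gives $\nabla_\beta\ell_{prl}(\beta_0)\to 0$ almost surely; then Lemma \ref{lemma:local_max_prl} Taylor-expands $\ell_{prl}$ to second order about $\beta_0$, bounds the quadratic term by $\tfrac{\lambda_{max}}{2}\norm{\beta-\beta_0}_2^2$ with $\lambda_{max}<0$ supplied by the strict concavity of Proposition \ref{prop:prl_concave}, and concludes $\ell_{prl}(\beta)<\ell_{prl}(\beta_0)$ with probability tending to one on every sphere $\norm{\beta-\beta_0}_2=a$, after which concavity pins the global maximizer inside the ball. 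Note that your cross-entropy identification step is exactly the population analogue of the paper's score cancellation (the condition $\psi_p'(u_0)=0$ is the same computation), so the two proofs exploit the same structural fact at different levels. What each buys: the paper's argument needs no explicit identifiability analysis and no convexity lemma, only behavior at the single point $\beta_0$; its soft spot is that $\lambda_{max}$ is a random, $n$-dependent eigenvalue (strict concavity in Proposition \ref{prop:prl_concave} is a per-sample statement), and the paper does not show it stays bounded away from zero as $n\to\infty$, which is needed for the negative quadratic term to dominate the vanishing score term. Your route sidesteps this entirely because all curvature and separation live in the deterministic limit $\ell$, and the Andersen--Gill/Pollard convexity lemma converts pointwise convergence into argmax convergence; this is the more standard and, as written, the more airtight M-estimation argument, at the cost of verifying integrability and uniqueness of the population maximizer (your non-degeneracy condition on $X$ holds here because $X$ is assumed continuous---the same fact the paper invokes via Okamoto's lemma in the proof of strict concavity).
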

For a proof we refer the reader to Appendix \ref{app:theorem_asm_prl}.

In order to obtain an estimate of the noise, we estimate the transformation function $h$ in a second step. We employ the following computationally fast and consistent estimation routine proposed by \citet{rank_reg}. In his proposal, \citet{rank_reg} considers non-random covariates, however, the results are applicable to our setup conditional on the observed data $\{X_i\}_{i=1}^n.$ The method exploits the normality assumption as well as knowledge of the ranks of $\{Y_i\}_{i=1}^n$ and thus of $\{h(Y_i)\}_{i=1}^n$. 

Let $\hat{F} (z)$ be the adjusted empirical distribution function of $Z_i := h(Y_i)$, that is
\begin{equation*}
\label{eqn:adj_emp_cdf_z_i}
    \hat{F} (z) := \frac{1}{n+1} \sum_{i = 1}^n \mathbbm{1}\{Z_i \leq z\}.
\end{equation*}
\begin{remark}
Note that we only require the ranks of $\{h(Y_i)\}_{i=1}^n$ to obtain the estimate $\hat{F}(h(Y_i))$. Since $h$ is a strictly increasing function, the ranks of  $\{h(Y_i)\}_{i=1}^n$ are given by the ranks of $\{Y_i\}_{i=1}^n$.
\end{remark}
We denote the cumulative distribution function of a randomly chosen $Z_i$ by $F_{\beta} (z)$, that is 
\begin{equation}
    \label{eqn:cdf_random_z_i}
    F_{\beta} (z) := \frac{1}{n} \sum_{i=1}^n \Phi(z - X_i^T \beta).
\end{equation}
Then an estimator for the functional relation $h$ at the sample points $\{Y_i\}_{i=1}^n$ is given by
\begin{equation}
    \label{eqn:est_h_lst}
    \hat{h}_G(Y_i) := F_{\hat{\beta}_{prl}}^{-1}(\hat{F}(h(Y_i))), \quad  i=1, \dots, n.
\end{equation}
By extending this estimator to a step function on $\mathbb{R}$, and under some additional assumptions, \cite{rank_reg} show that this estimator converges to $h$ almost surely at all continuity points of the function $h$. 

\begin{remark}
In our setup, we assume $h$ is continuous. Furthermore, the additional assumptions that ensure consistency in the setting from \cite{rank_reg} are mainly smoothness and moment conditions on the distributions of $X$ and $\varepsilon$. In the considered Gaussian setting most of them are already satisfied. For a detailed list of the assumptions, we refer the reader to Appendix \ref{app:assumptions}.
\end{remark}

\subsection{General Case}
\label{subsection:ranks}
The problem of estimating the parameter $\beta_0$ without additional assumptions on the distribution of the noise in model \eqref{model:pnl_regression_linear} is extensively studied in the literature, see i.e. \cite{doksum87, Han_1987, Sherman_1993, Abrevaya_1999, Abrevaya_1999b, Abrevaya_2003, Cavangh_1998, Zhang_2013}. Without any restriction on expectation or variance of the noise the function $h$ is not unique, since it can be replaced by location or scale transformations. Thus, to ensure unique identification, we assume that there exists a known $y_0$ such that $h(y_0) = 0$ and we scale the last element of $\beta_0$ to $1$, that is, $\beta_0 = (\theta_0, 1)$.

To simplify the optimization, we employ a method introduced by \citet{smoothed_ltm_2013} that utilizes the rank-based objective function
\begin{equation*}
\label{eqn:smoothed_objective}
\begin{aligned}
    S (\beta) := \binom{n}{2}^{-1} & \sum_{i < j} \Big( \mathds{1}\{Y_j > Y_i\}\Phi \left(\sqrt{n}(X_j - X_i)^T \beta \right) \\
    &+ \mathds{1}\{Y_j \leq Y_i\} \Phi \left(\sqrt{n} (X_i - X_j)^T \beta \right) \Big).
\end{aligned}
\end{equation*}

\begin{remark}
\citet{smoothed_ltm_2013} used the assumption $\norm{\beta_0}_2 = 1$ to ensure unique identification, which is equivalent to our assumption $\beta_0 = (\theta_0, 1)$ up to rescaling.
\end{remark}
To obtain a sparse solution the authors focused on a penalized version of $S(\beta)$. However, as we are only interested in estimating the residuals and do not necessarily need sparsity, we adapted their analysis for the following simplified non-penalized estimator $\hat{\beta}:= (\hat{\theta}, 1)$, where
 \begin{equation*}
     \label{eqn:est_beta_smoothed}
     \hat{\theta} := \underset{\theta \in \mathbb{R}^{m-1}}{ \arg\max } \; S(\theta, 1),
 \end{equation*}
by setting their penalty term to zero. 

Under some additional assumptions that mainly ensure smoothness of the distributions of $X$ and $\varepsilon$, \citet{smoothed_ltm_2013} prove the existence of a local maximizer $\hat{\theta}$ of $S(\theta, 1)$ with 
\begin{equation*}
    \norm{\hat{\theta} - \theta_0}_2 = O_P\left(n^{-\frac{1}{2}}\right)
\end{equation*}
and, thus, $\hat{\beta}$ defines a consistent estimator for $\beta_0$. This estimator uses only the rank information without requiring concrete knowledge of $h$. A detailed list of the additional assumptions that ensure consistency can be found in Appendix \ref{app:assumptions}.

In a second step we estimate the function $h$ in order to subsequently obtain an estimate of the noise. We used the method introduced in \cite{chen_2002} based on the rank correlation. To simplify the complex optimization of discrete objective functions, we employ a smoothed version introduced by \cite{Zhang_2013}.

The smoothed rank correlation objective function is defined by
\begin{align*}
    Q(z, y, \hat{\beta}) := \frac{1}{n(n-1)}& \sum_{i \neq j}  [ (d_{jy} - d_{iy_0})  \\
    & \times \Phi(\sqrt{n} ((X_j - X_i)^T \hat{\beta} - z)) ],
\end{align*}
where $d_{jy} := \mathds{1}(Y_j \geq y)$ and $d_{iy_0} := \mathds{1}(Y_i \geq y_0)$. Then we define an estimator of the function $h$ at $y$ via
\begin{equation}\label{eqn:est_h_smoothed}
    \hat{h}(y) := \underset{z \in \Omega_h}{ \arg\max } \;  Q(z, y, \hat{\beta}) \; ,
\end{equation}
where $\Omega_h$ is an appropriate compact set. 

In Theorem 4.1, \citet{Zhang_2013} establish consistency of the proposed estimator for $h$ under a few assumptions, that include $\sqrt{n}$-consistency of the involved estimator $\hat{\beta}$ and strict  monotonicity of the function $h$, as well as some additional regularity assumptions. A detailed list of the additional assumptions can be found in Appendix \ref{app:assumptions}.

Without restricting the optimization space by $\Omega_h$ the problem \eqref{eqn:est_h_smoothed} is ill-posed in the sense that $\hat{h}(y) \to \infty$ for $y = \max \{Y_i\}_{i=1}^n$ and $\hat{h}(y) \to -\infty$ for $y = \min \{Y_i\}_{i=1}^n$. To circumvent the issue of choosing a proper compact set $\Omega_h$, we added an $L_2$ regularization term and optimized over $\mathbb{R}$, that is,
\begin{equation*}
    \label{eqn:est_h_smoothed_experiments}
    \hat{h}(y) := \underset{z \in \mathbb{R}}{ \arg\max } \;  \{Q(z, y, \hat{\beta}) \;  - \lambda z^2 \}.
\end{equation*}
 In the experiments we used the regularization parameter $\lambda = 10^{-3}$, which turned out to be small enough to not affect the estimated values significantly and at the same time bounded the objective function for the observed extremes.

\section{LEARNING PNL MODELS}
\label{section:learn_pnl}

By combining the previously introduced rank-based estimators of the functional relations in PNL models we obtain estimates of the stochastic error terms. Using these rank-based estimated error terms,  we propose a routine to learn the underlying causal structure by recursively identifying sink nodes via independence testing. Further, we show that our proposed routine consistently recovers a valid causal ordering under identifiability assumptions.

Suppose we observe data in form of $n$ independent copies $X_1, \dots , X_n$ from a random vector $ X := (X^{(1)}, \dots, X^{(m)})$. We assume that $X$ follows a PNL causal model, that is, the data generating process is defined by the structural equations 
\begin{equation*}
    X^{(k)} = f^{(k)}\left(g^{(k)}\left(X^{(\textbf{PA}_k)}\right) + \varepsilon^{(k)} \right), \quad k=1, \dots , m,
\end{equation*}
where $\textbf{PA}_k$, called the parents of  $X^{(k)}$, are a subset of $\{1, \dots ,m\}\setminus\{k\}$. The causal perspective stems from viewing those equations as making assignments. Each variable on the left-hand side is assigned the value specified on the right-hand side, given by the value of its parents and a stochastic error term. The causal structure inherent in such structural equations is naturally represented by a directed graph $\mathcal{G}^0$, where edges indicate which other variables each variable causally depends upon. As in related work, we assume the corresponding directed graph to be acyclic (DAG). The noise variables $\{\varepsilon^{(k)}\}_{k=1}^m$ are assumed to be mutually independent and $\varepsilon^{(k)} \indep X^{(\textbf{PA}_k)}$ for each $k=1, \dots , m$. The main ansatz for inferring the causal structure is to leverage the independence structure of the stochastic noise $\varepsilon^{(k)}$ and a correctly specified parent set, that is,  $\varepsilon^{(k)}$ is independent of all $X^{(j)}$ that precede $X^{(k)}$ in at least one true causal ordering of the underlying graph. 

We focus on inferring the causal ordering to reduce the computational burden, however, the framework can easily be adapted to infer the specific causal graph structure by pruning redundant edges. The causal ordering of a graph $\mathcal{G}^0$ is given by a permutation $\pi$ of $\{1, \dots ,m\}$, such that, if there exists a directed edge from node $\pi(i)$ to node $\pi(j)$ in the graph then $i<j$. We emphasize that the causal ordering for a given graph is not necessarily unique but each causal ordering $\pi$ corresponds to a unique, fully connected DAG $\mathcal{G}^{\pi}$, where  $\mathcal{G}^{\pi}$ has a directed edge from node $\pi(i)$ to node $\pi(j)$ if and only if $i < j$. Thus, similar to \cite{CAM_2014}, we can define the set of true causal orderings $\Pi^0$ for any DAG $\mathcal{G}^0$ as the set of all causal orderings $\pi$ that correspond to fully connected DAGs $\mathcal{G}^{\pi}$ which contain $\mathcal{G}^0$ as a sub-graph, that is
\begin{equation*}
    \Pi^0 := \{\pi \text{ : } \mathcal{G}^{\pi} \text{ is a super-graph of } \mathcal{G}^0 \}.
\end{equation*}

\begin{remark}
In general $\Pi^0$ contains more than one element and all elements correspond to valid causal orderings of the DAG $\mathcal{G}^0$ (e.g. in the extreme case of an empty graph,  all permutations are true causal orderings).
\end{remark}
In order to apply the previously introduced rank-based regression methods, we assume that all functions $f^{(k)}$ in the data generating PNL causal model are continuous and strictly increasing, and thus, we can define their inverse via $h^{(k)}:=(f^{(k)})^{-1}$. Further, we assume that all $g^{(k)}$ are linear and the distribution of every stochastic error $\varepsilon^{(k)}$ is assumed to be continuous. We emphasize again, our method is applicable to nonlinear functional relations by means of basis expansions. 

Put together, each structural equation, i.e. each cause-effect relation, corresponds to a PNL regression model \eqref{model:pnl_regression} as introduced in the previous section. That is, the data generating process follows
\begin{equation*}
\label{model:pnl}
    h^{(k)}\left(X^{(k)}\right) = \left(X^{(\textbf{ND}_k)}\right)^T \beta^{(k)} + \varepsilon^{(k)}, \quad  k=1, \dots , m,
\end{equation*}
where the non-descendants $\textbf{ND}_k$ are given by all nodes that precede node $k$ in at least one true causal ordering and $\varepsilon^{(k)}$ is independent of $X^{(\textbf{ND}_k)}$. Note that the entries in $\beta^{(k)}$ which do not correspond to parents of node $k$ are simply zero. To leverage the independence $\varepsilon^{(k)} \indep X^{(\textbf{ND}_k)}$, we define 
\begin{equation*}
    X^{(-k)} := (X^{(1)}, \dots , X^{(k-1)}, X^{(k+1)}, \dots ,  X^{(m)}).
\end{equation*}
For every sink node $k$ in the graph, we have $\textbf{ND}_k = \{1, \dots, m \}\setminus \{ k \}$, and, thus, for every sink node $k$ the noise $\varepsilon^{(k)}$ is independent of $X^{(-k)}$. Moreover, if node $k$ is not a sink node in the graph, then the noise $\varepsilon^{(k)}$ is not independent of $X^{(-k)}$, since $X^{(-k)}$ contains at least one child of $k$. Thus, we can recursively identify a sink node using the HSIC \citep{HSIC_2005} measure of independence between the estimated noise $\hat{\varepsilon}^{(k)}$ and the remaining nodes $X^{(-k)}$.

We propose the following routine to learn one of the valid causal orderings of the graph. First, we utilize the rank-based estimators $\hat{h}^{(k)}$ and $\hat{\beta}^{(k)}$, introduced in the previous section, and estimate the noise via
\begin{equation*}
    \hat{\varepsilon}_j^{(k)} := \hat{h}^{(k)}\left(X_j^{(k)}\right) - \left(X_j^{(-k)}\right)^T \hat{\beta}^{(k)}, \quad j=1, \dots , n.
\end{equation*}
We repeat this noise estimation for all remaining nodes $k \in \{1, \dots , m\}$ and  subsequently calculate the HSIC test statistic between the estimated noises and the observed values of the remaining nodes $X^{(-k)}$, that is
\begin{equation*}
\label{eqn:t_k_definition}
    t_k := HSIC(\{ X_j^{(-k)}, \hat{\varepsilon}_j^{(k)} \}_{j=1}^n),  \quad k=1, \dots , m.
\end{equation*}
We determine the node which leads to the minimal test statistic as a sink node, that is, our proposed sink node estimator is defined by
\begin{equation*}
\label{eqn:sink_node_est_equality}
    \hat{\pi}(m) := \underset{k}{\arg\min} \; \{ t_k \} \;.
\end{equation*}
In the next step we remove $\hat{\pi}(m)$ from the set $\{1, \dots , m\}$ and repeat the sink node identification procedure to estimate $\hat{\pi}(m-1)$. Thus, recursively we obtain an estimate for the causal ordering 
\begin{equation*}
\label{eqn:sink_est_order}
    \hat{\pi} = (\hat{\pi}(1), \dots, \hat{\pi}(m)).
\end{equation*}
In the following Theorem we prove consistency of our proposed estimation routine for the causal ordering. It is clear that if at any step our method fails to correctly identify a remaining sink node, then it fails to estimate a valid causal ordering. Thus, we must require sink node identifiability from the joint distribution in order to ensure consistency of the estimated causal order. The following assumption \textbf{(A)} formalizes this intuition. 

\textit{\textbf{(A)} For each $k \in [1, m]$ and $A \subset [1, m] \setminus \{ k \}$ that contains at least one child of $X^{(k)}$ as well as for all strictly increasing, continuous functions $h: \mathbb{R} \to \mathbb{R}$ and for all $\beta \in \mathbb{R}^{|A|}$, there exists a constant $\xi > 0$, such that 
\begin{equation*}
    HSIC \left(\mathbb{P}^{N, X^{(A)}} \right) > \xi,
\end{equation*}
where
\begin{equation*}
    N := h\left(X^{(k)}\right) - \left(X^{(A)}\right)^T \beta.
\end{equation*}
}

\begin{theorem}
\label{thm:causal_order_consistency}
Under assumption \textbf{(A)} and consistency of the employed estimators $\hat{h}^{(k)}$ and $\hat{\beta}^{(k)}$ we have
\begin{equation*}
    \mathbb{P}(\hat{\pi} \in \Pi^0) \to 1 \text{ as } n \to \infty.
\end{equation*}
\end{theorem}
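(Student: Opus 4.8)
The plan is to reduce the statement to a single step of the recursion and then union over the at most $m$ steps. Let $R$ denote the node set remaining at a given step and let $S \subseteq R$ be the set of sink nodes of the subgraph induced on $R$. Removing a sink from a PNL DAG leaves the structural equations of the other nodes unchanged, so the induced model on $R$ is again a PNL model and \textbf{(A)} continues to apply to it. Writing $E_r$ for the event that the node identified at step $r$ lies in the current sink set, we have $\{\hat{\pi} \in \Pi^0\} \supseteq \bigcap_{r} E_r$, and since there are only finitely many steps it suffices to show $\mathbb{P}(E_r) \to 1$ for each. Thus I would fix one step and prove that $\arg\min_{k \in R} t_k \in S$ with probability tending to one, by separating the behaviour of $t_k$ on sinks and on non-sinks.

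First I would treat a sink $k \in S$. Here the regression of $X^{(k)}$ on $X^{(R \setminus \{k\})}$ is correctly specified, with $\varepsilon^{(k)} \indep X^{(R \setminus \{k\})}$, and the consistency hypotheses give $\hat{h}^{(k)} \to h^{(k)}$ and $\hat{\beta}^{(k)} \to \beta^{(k)}$, so that the estimated residuals satisfy $\hat{\varepsilon}_j^{(k)} = \varepsilon_j^{(k)} + o_P(1)$ uniformly over $j$. Since independence makes the population HSIC of $(\varepsilon^{(k)}, X^{(R \setminus \{k\})})$ equal to zero, and the empirical HSIC is a consistent functional that depends continuously (indeed Lipschitz, for bounded kernels) on the sample, the vanishing perturbation of the residual coordinate yields $t_k = o_P(1)$.

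Next I would treat a non-sink $k$, where $A := R \setminus \{k\}$ contains at least one child of $X^{(k)}$. The estimators $\hat{h}^{(k)}, \hat{\beta}^{(k)}$ converge to population targets $h^\ast, \beta^\ast$ lying in the admissible class of strictly increasing continuous functions and in $\mathbb{R}^{|A|}$. Applying \textbf{(A)} with these limits shows that the population HSIC of $\big(h^\ast(X^{(k)}) - (X^{(A)})^T \beta^\ast,\, X^{(A)}\big)$ exceeds a constant $\xi > 0$, and the same HSIC-stability argument gives $t_k \to c_k$ in probability for some $c_k > \xi$. (If one prefers not to invoke a limit for the misspecified fit, the same conclusion follows from a uniform-in-$(h,\beta)$ convergence of the empirical HSIC over the admissible class, bounding $t_k \ge \inf_{h,\beta} HSIC(\cdots) - o_P(1) \ge \xi - o_P(1)$.) Taking the minimum of the finitely many such constants over the non-sink nodes gives a single positive threshold: on an event of probability tending to one every sink has $t_k < \xi/2$ and every non-sink has $t_k > \xi/2$, so $\hat{\pi}(|R|) = \arg\min_{k} t_k$ is a sink; chaining this over the $m$ steps completes the argument.

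The step I expect to be the main obstacle is the plug-in HSIC convergence used in both cases: because $\hat{\varepsilon}_j^{(k)}$ is formed from estimators that depend on the whole sample, the pairs $(X_j^{(A)}, \hat{\varepsilon}_j^{(k)})$ are not i.i.d.\ draws from a fixed law, so textbook HSIC consistency cannot be cited directly. The crux is a stability bound showing that perturbing the residual coordinate by a uniformly $o_P(1)$ amount changes the empirical HSIC by $o_P(1)$, combined in the non-sink case with enough uniformity over the function class to transfer the lower bound of \textbf{(A)} to the estimated fit. Everything else — the reduction to one step, the union bound over the finitely many steps, and the final $\arg\min$ comparison — is routine bookkeeping.
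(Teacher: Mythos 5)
Your overall architecture matches the paper's proof: reduce to a single sink-identification step via the recursive construction and a union bound over the $m$ steps, then establish the dichotomy that $t_k \to 0$ for sinks while $t_k$ stays bounded away from zero for non-sinks (this is exactly the paper's Lemma \ref{lemma:sink_consistency}), and finish with the $\arg\min$ comparison. The genuine gap is in your non-sink case. Your primary argument posits that the misspecified fit converges to pseudo-true targets $(h^\ast, \beta^\ast)$ and applies \textbf{(A)} to those limits; but the theorem's hypotheses only assume consistency of the employed estimators, which is meaningful for the correctly specified regressions (sink nodes) --- nothing guarantees that $\hat{h}^{(k)}, \hat{\beta}^{(k)}$ converge to anything at all when $k$ is not a sink. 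The paper's key trick, which your proposal misses, is that no limit is needed: assumption \textbf{(A)} is quantified over \emph{all} strictly increasing continuous $h$ and \emph{all} $\beta$, so it applies directly to the realized random estimates on every sample, giving $HSIC(\mathbb{P}^{X^{(-k)}, \hat{\varepsilon}}) > \xi$, and this is combined with the finite-sample HSIC deviation bound (Theorem 3 of Gretton et al., 2005) to conclude $t_k > \xi + o_P(1)$. Your parenthetical fallback --- uniform convergence of the empirical HSIC over the whole admissible class --- would suffice in principle, but it is much stronger than what is needed and is neither established nor plausible without restrictions (the class ranges over all strictly increasing continuous $h$ and unbounded $\beta \in \mathbb{R}^{|A|}$).

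A second, smaller gap is in your sink case: you assert $\hat{\varepsilon}_j^{(k)} = \varepsilon_j^{(k)} + o_P(1)$ \emph{uniformly over} $j$, which does not follow from the assumed (pointwise) consistency of $\hat{h}^{(k)}$ --- indeed these rank-based estimators of $h$ behave worst at the sample extremes. The paper needs only $\hat{\varepsilon}_1^{(k)} - \varepsilon_1^{(k)} = o_P(1)$ for a generic observation: it decomposes the empirical HSIC into the three terms $\hat{Q}_1 + \hat{Q}_2 - \hat{Q}_3$ and compares each to its population counterpart using Markov's inequality, boundedness of the Gaussian kernels, covariance counting over index configurations, and the continuous mapping theorem together with uniform integrability, so that independence of $\varepsilon^{(k)}$ and $X^{(-k)}$ forces the limit to be zero. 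To your credit, you correctly identified the plug-in/dependence issue (the pairs $(X_j^{(-k)}, \hat{\varepsilon}_j^{(k)})$ are not i.i.d. from a fixed law) as the technical crux --- the paper's own treatment of this point is itself somewhat informal --- but your proposal leaves both this stability step and the non-sink lower bound as assumptions rather than arguments, and the non-sink step as you set it up would require hypotheses the theorem does not provide.
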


The proof can be found in Appendix \ref{app:proof_of_consistency}.

\begin{remark}
Location and scale transformations of the noise variables can be matched by transformations in the functions $h^{(k)}$, however, these transformations do not change the dependence structure and thus without loss of generality, we can assume that the location and scale assumptions in Section \ref{section:pnl_regression} are satisfied.

\end{remark}

\section{SIMULATIONS}
\label{section:simulations}
In this section we present the results of a simulation study with the aim to compare the performance of our algorithm to existing causal learning methods on various simulated data sets and validate the consistency results experimentally. If necessary in the specific application, our method can be easily extended to infer the full causal graph structure, however, this vastly increases the computation time, similar to the related existing methods. Since the main differences of the algorithms already come into play during the causal order estimation procedure, we compared the performance on this task alone. Our experiments were designed as follows. We randomly sampled a causal structure with $m$ ($m = 4$ and $m = 7$) nodes from Erdős–Rényi directed acyclic graphs with edge probability $\frac{2}{m-1}$. Thus, the expected total number of edges in the graph is $m$. We generated data according to the corresponding post-nonlinear model using the following structural relations
\begin{equation*}
    X^{(k)} = \Big(\sum_{j \in \textbf{PA}_k} \beta_{1j} X^{(j)} + \beta_{2j} (X^{(j)})^2    + \varepsilon_k \Big)^{1/3}
\end{equation*}
 for $k=1, \dots , m,$ with different noise distributions (standard Normal $\mathcal{N}(0, 1)$, standard Gumbel $Gumbel(0, 1)$ or standard Logistic $Logistic(0,1)$). Further, $\beta_{1j}$ and $\beta_{2j}$ are sampled from a uniform distribution with either a low range from $-10$ to $10$, representing a weak signal setting (low signal-to-noise ratio (SNR)), or a higher range from $-100$ to $100$, representing a strong signal setting (high SNR).

\begin{figure*}[h!]
    \centering
    \includegraphics[width=1.0\textwidth]{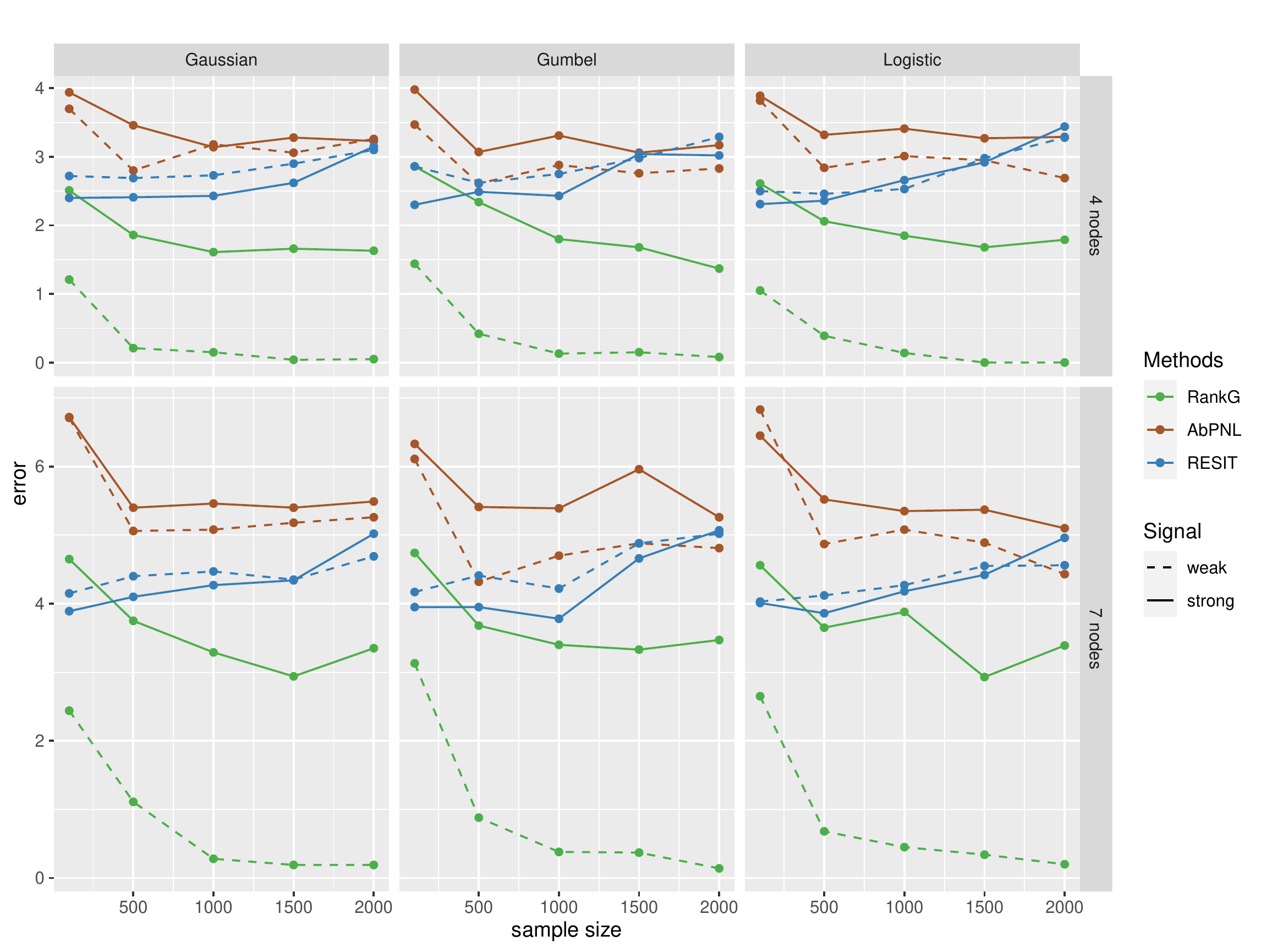}
    \caption{Performance of \texttt{RankG}, \texttt{AbPNL} and \texttt{RESIT} causal order estimation methods in different dimensions (4,7) against sample size.}
    \label{fig:res_prlg}
\end{figure*}

\begin{figure*}[!h]
    \centering
    \includegraphics[width=1.0\textwidth]{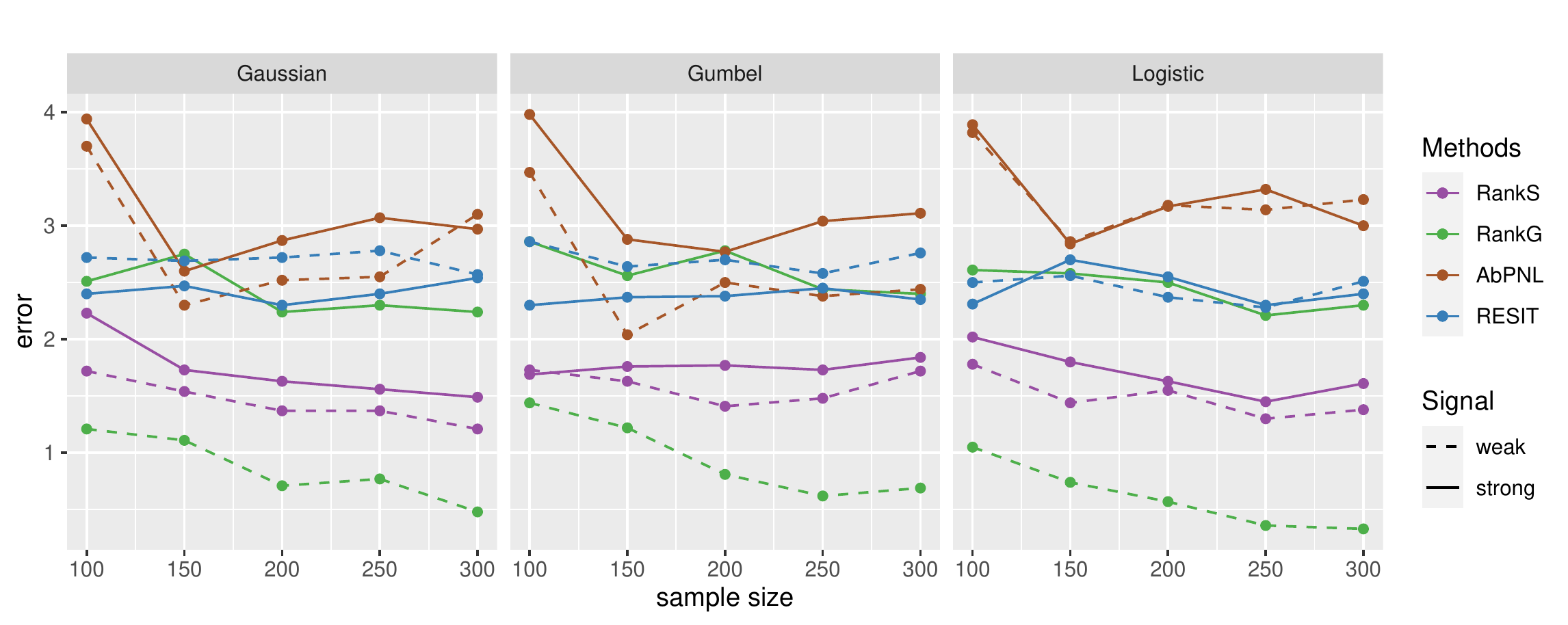}
    \caption{Performance of \texttt{RankS}, \texttt{RankG}, \texttt{AbPNL} and \texttt{RESIT} causal order estimation methods in dimension 4 against sample size.}
    \label{fig:res_smoothed}
\end{figure*}

\begin{figure*}[!htbp]
    \centering
    \includegraphics[width=1.0\textwidth]{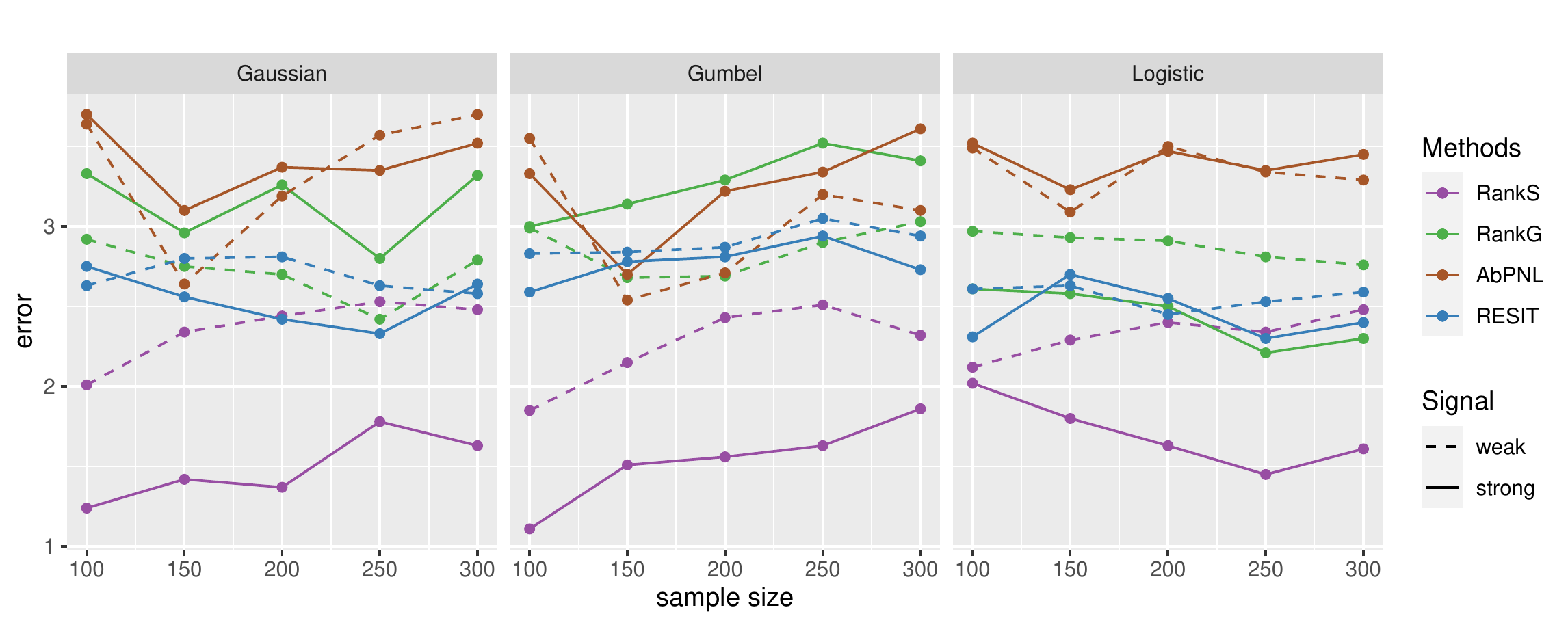}
    \caption{Performance of \texttt{RankS}, \texttt{RankG}, \texttt{AbPNL} and \texttt{RESIT} causal order estimation methods in dimension 4 against sample size, where function $g$ is 4 degree polynomial.}
    \label{fig:res_smoothed_4g}
\end{figure*}

\begin{remark}
We highlight that the inner function $g^{(k)}$ is quadratic in the parents of $X^{(k)}$. In the experiments, we used polynomial basis expansions of order two to linearly model the inner function by specifying not only parents but also their squares.
\end{remark}

Using this process, we generated $100$ independent data sets and estimated the causal ordering. We compared our results with the classical \texttt{RESIT} method for additive noise models \citep{ANM_2011} and the \texttt{AbPNL} method for post-nonlinear models \citep{pnl_mult_2022} restricted to the respective causal order estimation parts. The causal ordering of a given DAG is not necessarily unique, thus, as a measure of performance for an estimated causal ordering $\hat{\pi}$ we report the number of directed edges  $\hat{\pi}(i) \to \hat{\pi}(j)$ in the true graph $\mathcal{G}$ with $j<i$, that is 
\begin{equation*}
\label{eqn:causal_order_estimation_error}
    \# \{ (i, j) : \hat{\pi}(i) \to \hat{\pi}(j) \in \mathcal{G} \space \text{ and } j < i  \}.
\end{equation*}
This measure equals zero when $ \hat{\pi}$ is a valid causal ordering and achieves its maximum, the number of edges in $\mathcal{G}$, when $ \hat{\pi}$ is a reversed causal ordering.

Figure \ref{fig:res_prlg} shows the performance of the Gaussian method introduced in Subsection \ref{subsection:rankg}, named \texttt{RankG}, compared to \texttt{RESIT} and \texttt{AbPNL} on 4- and 7-dimensional causal graphs in settings with Gaussian, Gumbel (standard extreme value distribution) and Logistic noise. Dashed lines indicate the weak signal setting and solid lines depict the strong signal setting. We plot the mean of our performance measure, the number of wrongly oriented edges in the fully connected DAG corresponding to the estimated causal ordering, over all 100 data sets against the sample size. 

Our proposed \texttt{RankG} method outperforms the competition in almost all considered settings, especially in the low SNR setting. We emphasize that it might seem counterintuitive that the \texttt{RankG} method performs better in a low SNR setting than in a high SNR setting, however, the noise drives the identification in the rank-based learning procedure in PNL models. Thus, higher noise in comparison to the signal strengths induces more changes in the ranks that propagate through the graph, and thus, better performance of the rank-based estimation methods. 

Furthermore, in the low SNR setting our computational results support the theoretical consistency results and our method seems to recover a valid causal ordering even in moderate sample sizes.

The results in Figure \ref{fig:res_prlg} display that even under noise misspecification, that is, in the Gumbel and Logistic noise cases, the \texttt{RankG} method performs best. This might indicate some robustness of our proposed method for causal order estimation, even though we do not recover the true noise under misspecification (see Figure  \ref{fig:res_noise_est}).

We conducted similar experiments to compare the performance of our proposed general method introduced in Subsection \ref{subsection:ranks}, named \texttt{RankS}, however with lower sample sizes for computational reasons. Figure \ref{fig:res_smoothed} shows the results of the different competitors \texttt{RankS}, \texttt{RankG}, \texttt{RESIT} and \texttt{AbPNL} for 4-dimensional graphs with Gaussian, Gumbel and Logistic noise. 
Our proposed method \texttt{RankS} performs best in all considered sample sizes and all noise cases, except for the weak signal settings where \texttt{RankG} performs better. This might stem from the fact that the pairwise rank likelihood used in \texttt{RankG} better approximates the marginal rank likelihood.

In additional experiments, we investigated the behavior of the introduced methods for more complex functional relations $g$, namely a polynomial of degree 4. Similar to the experiments before, we sampled data from Erdős–Rényi DAGs, however, with the structural relations
\begin{equation*}
\begin{aligned}
    X^{(k)} =  \Big( & \sum_{j \in \textbf{PA}_k} \beta_{1j} X^{(j)} + \beta_{2j} (X^{(j)})^2 \\
    &+ \beta_{3j} (X^{(j)})^3 + \beta_{4j} (X^{(j)})^4 + \varepsilon_k \Big)^{1/3}
\end{aligned}
\end{equation*}
for $k=1, \dots , m,$ with different noise distributions and $\beta_{1j}, \beta_{2j}, \beta_{3j}$ and $ \beta_{4j}$ sampled with low or high SNR. Figure \ref{fig:res_smoothed_4g} shows the resulting mean performance measures for the different methods \texttt{RankS}, \texttt{RankG}, \texttt{RESIT} and \texttt{AbPNL} over 100 data sets. As expected, the task becomes more challenging by increasing the complexity of the functional parameters, since it is difficult to estimate the functional relations in the first place. However, even in the considered low sample sizes, our proposed  \texttt{RankS} method seems to detect some causal structure and outperform the competition.

Further, we analysed the behaviour of the used functional estimators introduced in Section \ref{section:pnl_regression} by performing the following experiments. We generated 100 independent data sets of sample size 500 according to model \eqref{model:pnl_regression_linear} with a 3-dimensional predictor $X$, standard Gaussian or Gumbel noise, cubic function $h(y) = y^3$ and linear parameters $\beta_0 = (10, 5, 1)$. Then we estimated the functional parameter $\beta_0$ and the function $h$ with the introduced rank-based methods. 

Figure \ref{fig:res_h_est} shows the estimation of the function $h$ for one representative result across the 100 replications. The red lines indicate the true value of the function $h$, while the black dots indicate the pointwise estimates. We notice that the estimation of the function $h$ with the rank-based method that relies on the normality assumption (used in \texttt{RankG}) fails to correctly estimate the functional relation at extremes under Gumbel noise. This is due to the misspecified tail probability structure. In contrast, the general estimation method employed in \texttt{RankS} is not influenced by the specific underlying noise distribution. However, in the Gaussian noise case we notice small estimation bias, which can be regulated with the hyperparameter $\lambda$ in the estimation procedure. Further, from the results across all 100 data sets, we noticed that the variance of the functional estimate across the data sets is higher using the general estimation methods in \texttt{RankS}.

Figure \ref{fig:res_beta_est} shows the estimation of the first two entries of $\beta_0$. Recall that in the general \texttt{RankS} method we fixed the last entry in our estimation of $\beta_0$. The box plots show the estimated values of $\beta_0$ across the 100 data sets and red dots indicate the true values. We see again that \texttt{RankG} estimates the parameter $\beta_0$ with a bias in the misspecified Gumbel noise setting, similar to the estimation of the function $h$. However, in the Gaussian setting the \texttt{RankG} method estimation of $\beta_0$ has a lower variance than \texttt{RankS} and in all other cases the median estimate corresponds to the true value.

\begin{figure}[!h]
    \centering
    \includegraphics[width=0.5\textwidth]{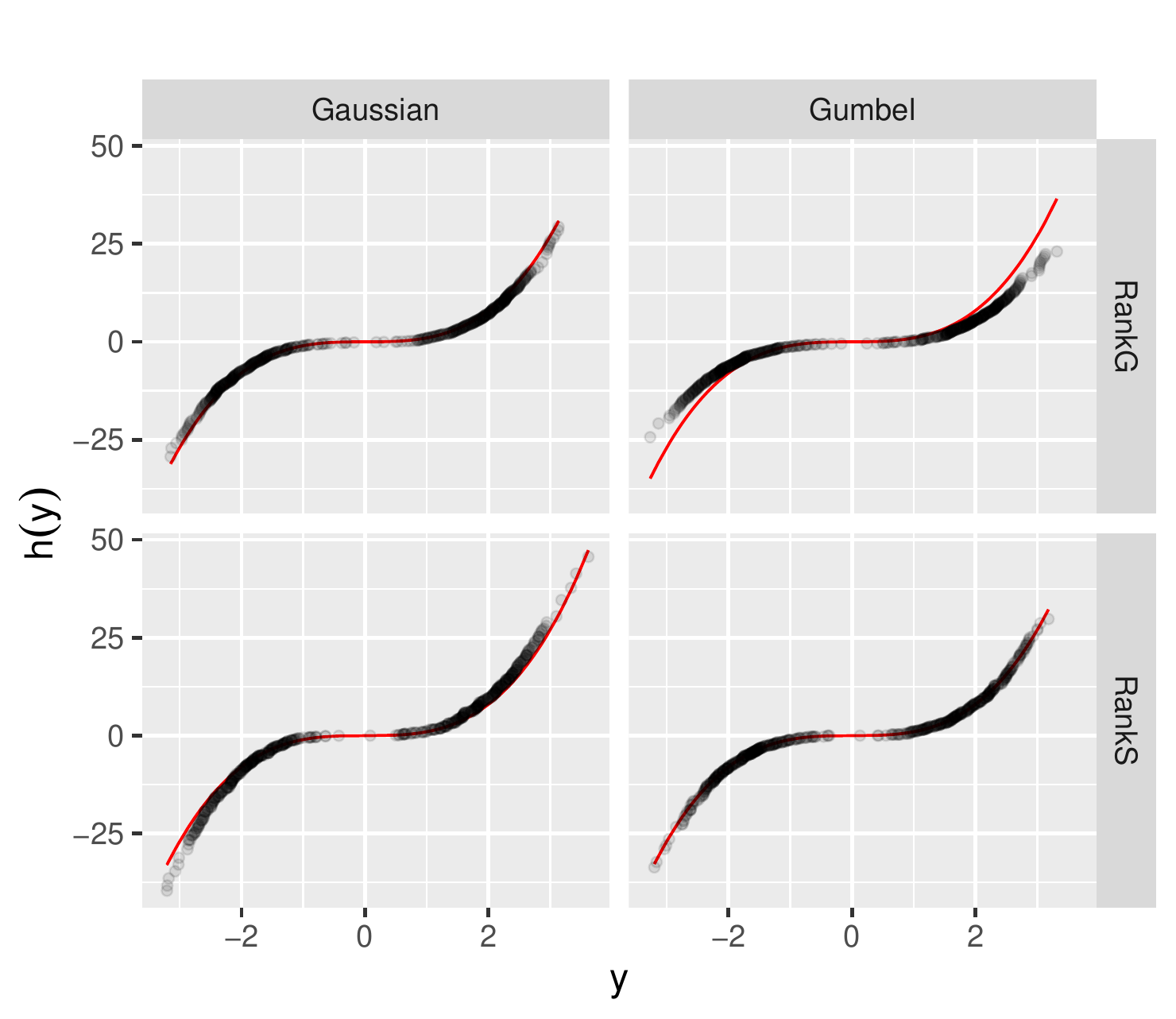}
    \caption{Estimation of the function $h$.}
    \label{fig:res_h_est}
\end{figure}

\begin{figure}[!h]
    \centering
    \includegraphics[width=0.5\textwidth]{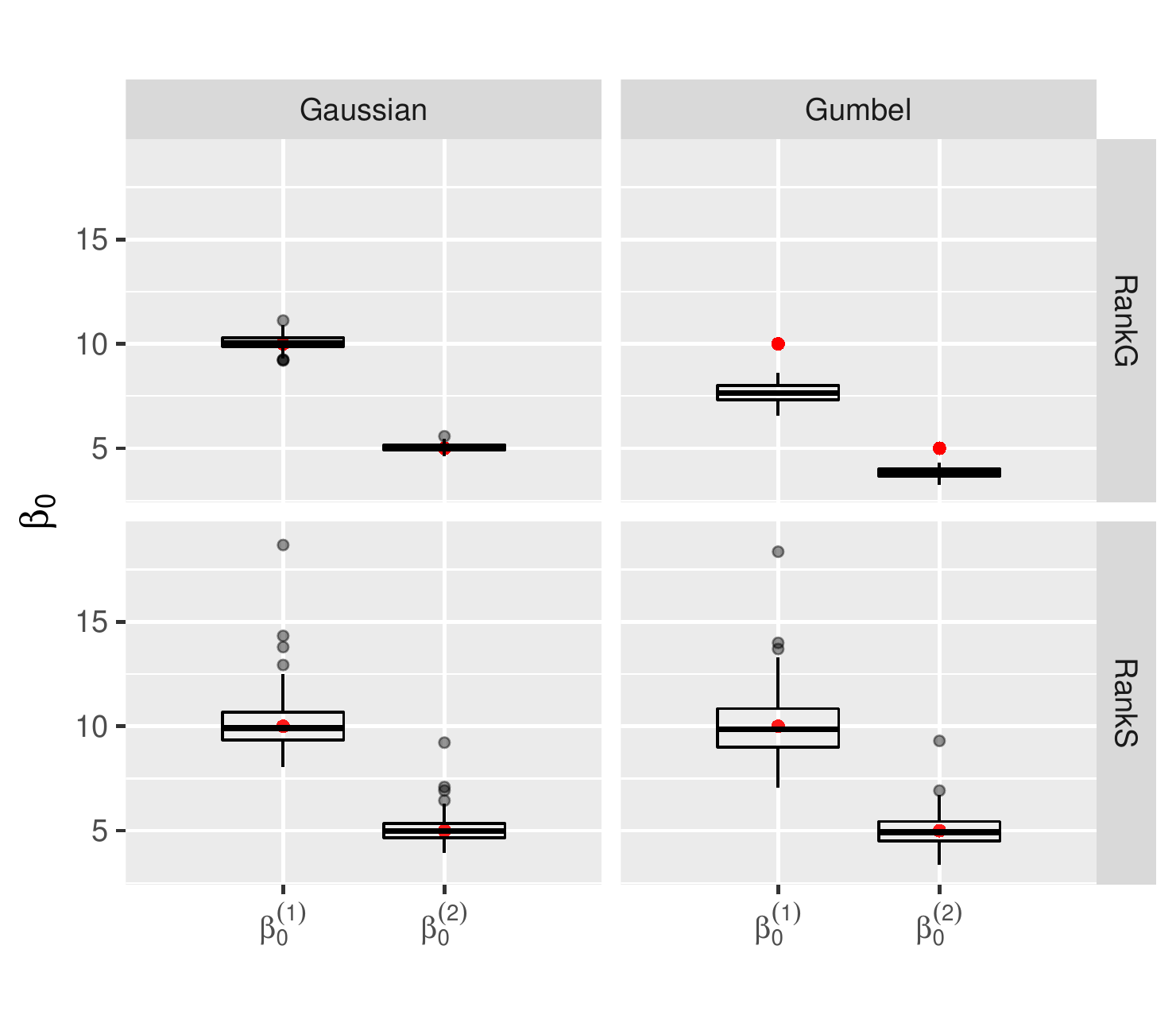}
    \caption{Estimation of $\beta_0$.}
    \label{fig:res_beta_est}
\end{figure}

\begin{figure}[!h]
    \centering
    \includegraphics[width=0.5\textwidth]{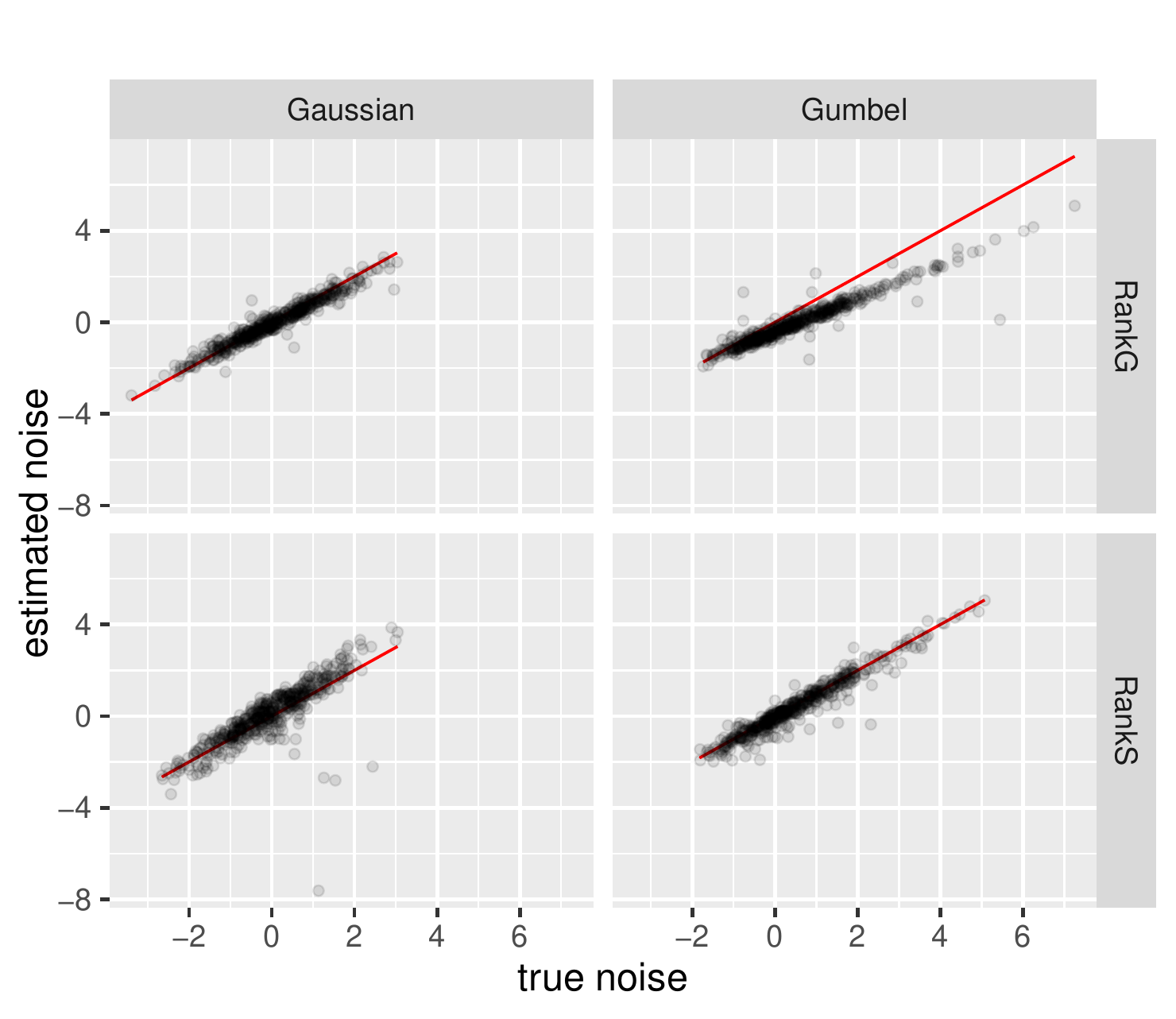}
    \caption{Estimation of the noise $\varepsilon$.}
    \label{fig:res_noise_est}
\end{figure}

Figure \ref{fig:res_noise_est} shows the estimated noise by combining both estimation results. We plot the estimated noise against the true values for one representative data set (similar to Figure \ref{fig:res_h_est}). Red lines correspond to a perfect estimation. The \texttt{RankG} method inherits the behaviour from both estimation parts and fails to correctly estimate the extreme noise cases. However, it outperforms the \texttt{RankS} method in the Gaussian setting.

\section{CONCLUSION}
\label{section:conclusion}
We proposed a new routine for causal discovery in multivariate post-nonlinear structural equation models. Our method disentangles the two tasks of estimating the functional relations and learning the causal structure by employing rank-based methods for the first task. Thus, our proposed routine is less susceptible to overfitting issues exhibited by the existing methods that rely on
minimizing dependence and subsequently testing for independence.

We introduced PNL rank regression methods to learn the functional relations in PNL models and subsequently estimate the residuals. As a special case, we first considered Gaussian noise and used pairwise rank likelihoods in a computationally fast algorithm, whereas, for the general noise case, we employed a smoothed version of rank correlations to obtain estimates of the functional relations. While our presentation focused on linearity in the inner functional relation to simplify  the theoretical analysis, the framework includes nonlinear relations by means of basis expansions. Employing the introduced estimators of the functional relations, we proposed a causal learning routine to recursively identify sink nodes based on independence tests with the estimated residuals. Further, we prove consistent causal order recovery of our proposed routine under identifiability assumptions and consistency of the employed functional estimators.

We validated our theoretical findings in a simulation study that showed that our proposed routine outperforms the competition and is able to recover a valid causal ordering even in moderate sample sizes.

\subsubsection*{Acknowledgements}
This project has received funding from the European Research Council (ERC) under the European Union’s Horizon 2020 research and innovation programme (grant agreement No. 83818), the German Federal Ministry of Education and Research, and the Bavarian State Ministry for Science and the Arts. The authors of this work take full responsibility for its content.

\bibliography{refs.bib}

\begin{thebibliography}{}

\bibitem[Abrevaya, 1999a]{Abrevaya_1999}
Abrevaya, J. (1999a).
\newblock Computation of the maximum rank correlation estimator.
\newblock {\em Economics Letters}, 62(3):279--285.

\bibitem[Abrevaya, 1999b]{Abrevaya_1999b}
Abrevaya, J. (1999b).
\newblock Leapfrog estimation of a fixed-effects model with unknown
  transformation of the dependent variable.
\newblock {\em Journal of Econometrics}, 93(2):203--228.

\bibitem[Abrevaya, 2003]{Abrevaya_2003}
Abrevaya, J. (2003).
\newblock Pairwise-difference rank estimation of the transformation model.
\newblock {\em Journal of Business \& Economic Statistics}, 21(3):437--447.

\bibitem[Bühlmann et~al., 2014]{CAM_2014}
Bühlmann, P., Peters, J., and Ernest, J. (2014).
\newblock {CAM: Causal additive models, high-dimensional order search and
  penalized regression}.
\newblock {\em The Annals of Statistics}, 42(6):2526 -- 2556.

\bibitem[Cavanagh and Sherman, 1998]{Cavangh_1998}
Cavanagh, C.~L. and Sherman, R.~P. (1998).
\newblock Rank estimators for monotonic index models.
\newblock {\em Journal of Econometrics}, 84:351--381.

\bibitem[Chen, 2002]{chen_2002}
Chen, S. (2002).
\newblock Rank estimation of transformation models.
\newblock {\em Econometrica}, 70(4):1683--1697.

\bibitem[Cuzick, 1988]{rank_reg}
Cuzick, J. (1988).
\newblock Rank regression.
\newblock {\em The Annals of Statistics}, 16(4):1369--1389.

\bibitem[Doksum, 1987]{doksum87}
Doksum, K.~A. (1987).
\newblock {An Extension of Partial Likelihood Methods for Proportional Hazard
  Models to General Transformation Models}.
\newblock {\em The Annals of Statistics}, 15(1):325 -- 345.

\bibitem[Drton et~al., 2011]{global_idf_linear_gaussian}
Drton, M., Foygel, R., and Sullivant, S. (2011).
\newblock {Global identifiability of linear structural equation models}.
\newblock {\em The Annals of Statistics}, 39(2):865 -- 886.

\bibitem[Glymour et~al., 2019]{review_ref}
Glymour, C., Zhang, K., , and Spirtes, P. (2019).
\newblock Review of causal discovery methods based on graphical models.
\newblock {\em Front. Genet.}

\bibitem[Gretton et~al., 2005]{HSIC_2005}
Gretton, A., Bousquet, O., Smola, A., and Sch\"{o}lkopf, B. (2005).
\newblock Measuring statistical dependence with hilbert-schmidt norms.
\newblock In {\em Proceedings of the 16th International Conference on
  Algorithmic Learning Theory}, ALT'05, page 63–77, Berlin, Heidelberg.
  Springer-Verlag.

\bibitem[Han, 1987]{Han_1987}
Han, A.~K. (1987).
\newblock Non-parametric analysis of a generalized regression model: The
  maximum rank correlation estimator.
\newblock {\em Journal of Econometrics}, 35(2):303--316.

\bibitem[Hoyer et~al., 2008]{hoyer_2009}
Hoyer, P., Janzing, D., Mooij, J.~M., Peters, J., and Sch\"{o}lkopf, B. (2008).
\newblock Nonlinear causal discovery with additive noise models.
\newblock In {\em Advances in Neural Information Processing Systems},
  volume~21. Curran Associates, Inc.

\bibitem[Lin and Peng, 2013]{smoothed_ltm_2013}
Lin, H. and Peng, H. (2013).
\newblock Smoothed rank correlation of the linear transformation regression
  model.
\newblock {\em Computational Statistics $\&$ Data Analysis}, 57(1):615--630.

\bibitem[Moneta et~al., 2013]{oxford_ref}
Moneta, A., Entner, D., Hoyer, P.~O., and Coad, A. (2013).
\newblock {Causal Inference by Independent Component Analysis: Theory and
  Applications}.
\newblock {\em Oxford Bulletin of Economics and Statistics}, 75(5):705--730.

\bibitem[Okamoto, 1973]{okamoto1973}
Okamoto, M. (1973).
\newblock Distinctness of the eigenvalues of a quadratic form in a multivariate
  sample.
\newblock {\em The Annals of Statistics}, 1(4):763--765.

\bibitem[Opgen-Rhein and Strimmer, 2007]{biology_ref}
Opgen-Rhein, R. and Strimmer, K. (2007).
\newblock From correlation to causation networks: a simple approximate learning
  algorithm and its application to high-dimensional plant gene expression data.
\newblock {\em BMC Systems Biology}.

\bibitem[Peters et~al., 2014]{ANM_2011}
Peters, J., Mooij, J.~M., Janzing, D., and Sch{{\"o}}lkopf, B. (2014).
\newblock Causal discovery with continuous additive noise models.
\newblock {\em Journal of Machine Learning Research}, 15(58):2009--2053.

\bibitem[Serfling, 1980]{Serfling_1980}
Serfling, R.~J. (1980).
\newblock {\em Approximation Theorems of Mathematical Statistics}.
\newblock John Wiley and Sons.

\bibitem[Sherman, 1993]{Sherman_1993}
Sherman, R.~P. (1993).
\newblock The limiting distribution of the maximum rank correlation estimator.
\newblock {\em Econometrica}, 61(1):123--137.

\bibitem[Spirtes et~al., 2000]{Spirtes2000}
Spirtes, P., Glymour, C., and Scheines, R. (2000).
\newblock {\em Causation, Prediction, and Search}.
\newblock MIT press, 2nd edition.

\bibitem[Spirtes and Zhang, 2016]{randomized_experiments_ref}
Spirtes, P. and Zhang, K. (2016).
\newblock Causal discovery and inference: concepts and recent methodological
  advances.
\newblock {\em Applied Informatics}.

\bibitem[{Teran Hidalgo} et~al., 2018]{hsic_proof_part}
{Teran Hidalgo}, S.~J., Wu, M.~C., Engel, S.~M., and Kosorok, M.~R. (2018).
\newblock Goodness-of-fit test for nonparametric regression models: Smoothing
  spline anova models as example.
\newblock {\em Computational Statistics \& Data Analysis}, 122:135--155.

\bibitem[Tu et~al., 2022]{pnl_optimal_transport_2022}
Tu, R., Zhang, K., Kjellstrom, H., and Zhang, C. (2022).
\newblock Optimal transport for causal discovery.
\newblock In {\em International Conference on Learning Representations}.

\bibitem[Uemura and Shimizu, 2020]{pnl_bv_2020}
Uemura, K. and Shimizu, S. (2020).
\newblock Estimation of post-nonlinear causal models using autoencoding
  structure.
\newblock In {\em ICASSP 2020 - 2020 IEEE International Conference on
  Acoustics, Speech and Signal Processing (ICASSP)}, pages 3312--3316.

\bibitem[Uemura et~al., 2022]{pnl_mult_2022}
Uemura, K., Takagi, T., Takayuki, K., Yoshida, H., and Shimizu, S. (2022).
\newblock A multivariate causal discovery based on post-nonlinear model.
\newblock In {\em First Conference on Causal Learning and Reasoning}.

\bibitem[{Yu} et~al., 2021]{lin_tr_models_prl}
{Yu}, T., {Li}, P., {Chen}, B., {Yuan}, A., and {Qin}, J. (2021).
\newblock {Maximum pairwise-rank-likelihood-based inference for the
  semiparametric transformation model}.
\newblock {\em arXiv e-prints}, page arXiv:2103.13435.

\bibitem[Zhang, 2013]{Zhang_2013}
Zhang, J. (2013).
\newblock Estimation and testing methods for monotone transformation models.
\newblock PhD thesis, Columbia University.

\bibitem[Zhang and Hyv\"{a}rinen, 2009]{idf_bivariate}
Zhang, K. and Hyv\"{a}rinen, A. (2009).
\newblock On the identifiability of the post-nonlinear causal model.
\newblock UAI'09, page 647–655.

\end{thebibliography}

\appendix
\onecolumn

\section{ADDITIONAL ASSUMPTIONS}
\label{app:assumptions}
In this section, we list all additional assumptions that are mentioned in the main paper in detail. 

\citet{rank_reg} employs the following 4 (additional) assumptions to ensure the consistency of $\hat{h}_G$ defined in \eqref{eqn:est_h_lst}. Thus, the following assumptions together with the modelling assumptions and assumption \textbf{(A)} in the main paper ensure the consistency of the proposed \texttt{RankG} method. 

\begin{enumerate}[label=\textbf{AG\arabic*}]
    \item Let $G_n(x) := \frac{1}{n} \sum_{j=1}^n \mathds{1} \{X_j \leq x \} $. Assume $Z_n$ has distribution $G_n$, then $|Z_n|^t$ is uniformly integrable for some $t$ which is specified in the next assumption (at least $t>4$). 
    \item For $F_{\beta}(z)$ defined in \eqref{eqn:cdf_random_z_i}, there exists finite $K$ such that 
    \begin{equation*}
        F_{\beta}^{-1}(u) + u(1-u)(F_{\beta}^{-1}(u))' \leq \frac{K}{((u(1-u)))^{\alpha}}, \quad \forall \ u \in (0, 1),
    \end{equation*}
    for all $n$ and $\beta \in B$, where $B$ is a neighborhood of $\beta_0$ and $\alpha + t^{-1} < \frac{1}{2}$.
    
    \item The function $Z(\beta):=\mathbb{E}[X| X^T (\beta -\beta_0) = 0]$ is $L_2$ continuous as $\beta \to \beta_0$.
    \item The following inequality holds
    \begin{equation*}
        \mathbb{E} [X(X - Z(\beta_0)] > 0.
    \end{equation*}
    
\end{enumerate}
The main assumptions above essentially correspond to moment conditions on the distribution of $X$. 

Following \cite{smoothed_ltm_2013} (\textbf{AS1-AS4}) and \cite{Zhang_2013} (\textbf{AS5-AS9}), we list the additional assumptions that are required to ensure consistency of the estimators in Section \ref{subsection:ranks}. Thus, in combination with the modelling assumptions and assumption $\textbf{(A)}$ in the main paper, the \texttt{RankS} method is consistent.

\begin{enumerate}[label=\textbf{AS\arabic*}]
    \item Let $g$ be the density function of $(X_j - X_i)^T \beta_0$ and $F$ the distribution function of $\varepsilon_j - \varepsilon_i$. Define the functions $\Gamma(s) := \mathbb{E} [(X_j-X_i)^T | (X_j - X_i)^ \beta_0 = s] $ and $\Omega(s) := F(s)\Gamma(s)g(s)$. Then $\Omega'(0)$ is nonsingular.
    \item The density $g$ is positive with a continuous second derivative on its corresponding compact support.
    \item The function $F$ has a continuous second derivative on its corresponding support.
    \item The random variable $X$ is bounded with a compact support. 

    \item The true parameter $\theta_0$ is an interior point of a compact subset $\Theta \subset \mathbb{R}^{m-1}$.
    \item The support of $X$ is not contained in a linear subspace of $\mathbb{R}^m$. Moreover, conditional on the first $m-1$ components of $X$, the last component of $X$ has a density function with respect to the Lebesgue measure. 
    \item Define 
    \begin{equation*}
        \tau (y, x, \theta) := \mathbb{E} [\mathds{1} \{ y > Y\} \mathds{1} \{ (x-X)^T (\theta^T, 1)^T > 0\} + \mathds{1} \{ y < Y\} \mathds{1} \{ (x-X)^T (\theta^T, 1)^T < 0\}]
    \end{equation*}
    and let
    \begin{equation*}
        |\nabla_m| \tau (y, x, \theta) = \sum_{i_1, \dots, i_m} \left|\frac{\partial^m \tau (y, x, \theta)}{\partial \theta_{i_1} \dots \partial \theta_{i_m}}\right|.
    \end{equation*}
    There exists a neighborhood $\mathcal{N}$ of $\theta_0$ such that for each pair of $(y, x)$ in the support of $(Y, X)$ the following hold 
    \begin{itemize}
        \item The second derivatives of $\tau (y, x, \theta)$ with respect to $\theta$ exist in $\mathcal{N}$.
        \item There exists an integrable function $M(y, x)$ such that for all $\theta$ in $\mathcal{N}$
        \begin{equation*}
            \norm{\nabla_2 \tau (y, x, \theta) - \nabla_2 \tau (y, x, \theta_0)} \leq M(y, x) |\theta - \theta_0|.
        \end{equation*}
        \item $\mathbb{E} [(|\nabla_1| \tau (Y, X, \theta_0))^2] < \infty$.
        \item $\mathbb{E} [|\nabla_2| \tau (Y, X, \theta_0)] <  \infty$.
        \item The matrix $\mathbb{E} [ \nabla_2 \tau (Y, X, \theta_0)]$ is strictly negative definite.
    \end{itemize}
    \item There exists $\epsilon^*>0$ and  $y_1, y_2$ in the support of $Y$ such that $[h(y_1-\epsilon^*), h(y_2 + \epsilon^*)]$ is contained in a compact interval.
    \item For $\omega_1 = (x^1, y^1)$, $\omega_2 = (x^2, y^2)$ and $W = (X, Y)$ we define
    \begin{equation*}
        f^z(\omega_1, \omega_2, z, y, \beta) := (\mathds{1} \{ y^1 \geq y \} - \mathds{1} \{ y^2 \geq y_0 \}) \mathds{1} \{ (x^1 - x^2)^T \beta \geq z \}.
    \end{equation*}
    Further, we define
    \begin{equation*}
        \tau(\omega, z, y, \beta) := \mathbb{E}[f^z(\omega, W, z, y, \beta) + f^z(W, \omega, z, y, \beta)],
    \end{equation*}
    then
    \begin{equation*}
        V^z(y) := \mathbb{E} \left[ \frac{\partial^2 \tau(W, h(y), y, \beta_0)}{\partial z^2} \right]
    \end{equation*}
    is negative for each $y \in [y_1, y_2]$ and uniformly bounded away from 0. 
\end{enumerate}
\citet{Zhang_2013} show uniform consistency on the interval $[y_1, y_2]$ defined in the assumptions \textbf{AS8-AS9}.

\section{PROOF OF PROPOSITION \ref{prop:prl_concave}}
\label{app:prop_concavity}
To prove Proposition \ref{prop:prl_concave} we employ the following two Lemmas.

\begin{lemma}
\label{lem:gaussian_func_negative}
For all $z \in \mathbb{R}$ we have
\begin{equation*}
    \phi'(z) \Phi(z) - (\phi(z))^2 < 0,
\end{equation*} 
where $\Phi$ and $\phi$ denote the CDF and PDF of the standard normal distribution.
\end{lemma}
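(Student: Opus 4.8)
The plan is to factor the expression using the elementary identity $\phi'(z) = -z\,\phi(z)$, which collapses the inequality into a statement about a single auxiliary function. Substituting gives
\[
\phi'(z)\Phi(z) - (\phi(z))^2 = -\phi(z)\bigl(z\Phi(z) + \phi(z)\bigr).
\]
Since $\phi(z) > 0$ for every $z \in \mathbb{R}$, the sign of the left-hand side is the opposite of the sign of the auxiliary function $\psi(z) := z\Phi(z) + \phi(z)$. Hence it suffices to prove that $\psi(z) > 0$ for all $z \in \mathbb{R}$.

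First I would show that $\psi$ is strictly increasing by differentiating: using $\phi'(z) = -z\phi(z)$ once more,
\[
\psi'(z) = \Phi(z) + z\phi(z) + \phi'(z) = \Phi(z),
\]
which is strictly positive everywhere. Next I would compute $\lim_{z\to-\infty}\psi(z) = 0$. Because $\psi$ is strictly increasing and its infimum (the limit at $-\infty$) equals $0$ and is attained only in the limit, it follows that $\psi(z) > 0$ for every finite $z$, which establishes the claim.

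As a sanity check, one can instead recognize $\psi(z) = \mathbb{E}[(z - Z)_+]$ for a standard normal $Z$, which follows from $\int_{-\infty}^z u\phi(u)\,du = -\phi(z)$. This representation makes positivity immediate, since $(z-Z)_+ \geq 0$ and $\mathbb{P}(Z < z) > 0$ for every finite $z$.

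The only slightly delicate point is the limit $z\Phi(z) \to 0$ as $z \to -\infty$, which I would dispatch with the standard Gaussian tail (Mills ratio) bound $\Phi(z) \leq \phi(z)/|z|$ for $z < 0$, giving $|z|\,\Phi(z) \leq \phi(z) \to 0$. Everything else is a routine derivative computation; the real content is the factorization isolating $\psi$ together with the observation that $\psi' = \Phi$, which makes the monotonicity-plus-limit argument go through cleanly.
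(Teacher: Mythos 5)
Your proof is correct and follows essentially the same route as the paper's: factor out $\phi(z)$ using $\phi'(z) = -z\phi(z)$, then show the remaining factor is one-signed via monotonicity (its derivative collapses to $\pm\Phi(z)$) together with the limit $0$ at $-\infty$. The only differences are cosmetic: you work with $\psi(z) = z\Phi(z)+\phi(z)$ instead of the paper's $g(z) = -z\Phi(z)-\phi(z)$, and you settle the limit by the Mills-ratio bound where the paper uses L'H\^opital.
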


\begin{proof}
With $h(z) := \phi'(z) \Phi(z) - (\phi(z))^2$, we show that $h(z) < 0$. Substituting the derivative of $\phi$ we have
\begin{equation*}
    h(z) = -z \phi(z) \Phi(z) - (\phi(z))^2 = \phi(z)(-z \Phi(z) - \phi(z)).
\end{equation*}
Since $\phi(z) > 0$ for all $z\in \mathbb{R}$ it remains to show that $g(z) := -z \Phi(z) - \phi(z) < 0$. For $z \ge 0$ this is clear. Thus we consider the case $z < 0$. We have for the derivative of $g$
\begin{equation*}
    g'(z) = -\Phi(z) -z \phi(z) + z \phi(z) = -\Phi(z) < 0.
\end{equation*}
Therefore, $g$ is a strictly decreasing function. The limit of $g(z)$ for $z \to - \infty$ is given by
\begin{align*}
    \lim_{z \to - \infty} g(z) &= \lim_{z \to - \infty} (-z \Phi(z) - \phi(z))  = -\lim_{z \to - \infty} \frac{\Phi(z)}{\frac{1}{z}} \\
    &= \lim_{z \to - \infty} \frac{\phi(z)}{\frac{1}{z^2}} = \lim_{z \to - \infty} \frac{1}{\sqrt{2 \pi}} \frac{z^2}{e^{z^2/2}} = 0,
\end{align*}
and the claim follows.
\end{proof}

\begin{lemma}
\label{lem:concave_log_gaussian_func}
The function 
$$
f(x) := \log \Phi(c^T x), \quad x \in \mathbb{R}^m
$$
is concave, where $\Phi$ is the CDF of the standard normal distribution and $c \in \mathbb{R}^m$ is a nonzero constant. Moreover, $v^T \nabla^2 f(x) v = 0$ for a vector $v \in \mathbb{R}^m$ and Hessian matrix $\nabla^2 f(x)$ if and only if $v^T c = 0$.
\end{lemma}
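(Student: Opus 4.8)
The plan is to compute the Hessian of $f$ explicitly, observe that it has rank one, and then read off both the concavity and the equality condition directly from its structure, using Lemma~\ref{lem:gaussian_func_negative} to control the sign of the scalar prefactor.

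First I would set $u := c^T x$ and abbreviate $\psi(u) := \phi(u)/\Phi(u)$, so that $\frac{d}{du}\log\Phi(u) = \psi(u)$. By the chain rule, $\nabla f(x) = \psi(c^T x)\, c$, and differentiating once more produces the rank-one Hessian
\begin{equation*}
    \nabla^2 f(x) = \psi'(c^T x)\, c\, c^T, \qquad
    \psi'(u) = \frac{\phi'(u)\Phi(u) - (\phi(u))^2}{(\Phi(u))^2}.
\end{equation*}
This reduces everything to understanding the sign of the single scalar function $\psi'$.

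The key step is then immediate from the preceding lemma: the denominator $(\Phi(u))^2$ is strictly positive, while Lemma~\ref{lem:gaussian_func_negative} guarantees that the numerator $\phi'(u)\Phi(u) - (\phi(u))^2$ is strictly negative for every $u \in \mathbb{R}$. Hence $\psi'(c^T x) < 0$ for all $x$. For an arbitrary direction $v \in \mathbb{R}^m$ the rank-one form gives
\begin{equation*}
    v^T \nabla^2 f(x)\, v = \psi'(c^T x)\, (c^T v)^2 \leq 0,
\end{equation*}
so $\nabla^2 f(x)$ is negative semidefinite and $f$ is concave.

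Finally, the equality characterization falls out of the same identity. Because $\psi'(c^T x)$ is \emph{strictly} negative and not merely nonpositive, the product $\psi'(c^T x)(c^T v)^2$ vanishes if and only if $(c^T v)^2 = 0$, i.e.\ if and only if $v^T c = 0$. There is no genuine obstacle in this argument; the only points requiring care are the correct computation of $\psi'$ and the recognition that it is precisely the \emph{strict} inequality of Lemma~\ref{lem:gaussian_func_negative}, rather than a weak one, that sharpens the statement to an ``if and only if.''
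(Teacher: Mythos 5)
Your proof is correct and follows essentially the same route as the paper's: compute the rank-one Hessian $\nabla^2 f(x) = \frac{\phi'(c^Tx)\Phi(c^Tx) - (\phi(c^Tx))^2}{(\Phi(c^Tx))^2}\, c c^T$, invoke Lemma~\ref{lem:gaussian_func_negative} for strict negativity of the scalar factor, and read off both concavity and the equality condition $v^Tc = 0$ from the quadratic form. The only difference is cosmetic bookkeeping via the abbreviation $\psi = \phi/\Phi$.
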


\begin{proof}
The function $f$ is twice differentiable, and thus for the first part of the Lemma it is enough to show that the Hessian of $f(x)$ is negative semi-definite. The gradient of $f$ is given by
\begin{equation*}
    \nabla f(x) = \frac{\phi(c^Tx)}{\Phi(c^Tx)} c,
\end{equation*}
where $\phi$ is the PDF of the standard normal distribution. Thus, the Hessian is
\begin{equation*}
    \nabla^2 f(x) = \frac{\phi'(c^Tx) \Phi(c^Tx) - (\phi(c^Tx))^2}{(\Phi(c^Tx))^2} \cdot c c^T.
\end{equation*}
So, for any $v \in \mathbb{R}^m$ we have 
\begin{align*}
    v^T \nabla^2 f(x) v &= v^T  \frac{\phi'(c^Tx) \Phi(c^Tx) - (\phi(c^Tx))^2}{(\Phi(c^Tx))^2} \cdot c c^T v \\
    &=   \frac{\phi'(c^Tx) \Phi(c^Tx) - (\phi(c^Tx))^2}{(\Phi(c^Tx))^2} \cdot v^T c c^T v \\
    &= \frac{\phi'(c^Tx) \Phi(c^Tx) - (\phi(c^Tx))^2}{(\Phi(c^Tx))^2} \cdot (v^T c)^2 \le 0,
\end{align*}
where the last step follows from Lemma \ref{lem:gaussian_func_negative} and the fact that $(v^T c)^2 \ge 0$. 

Moreover, $v^T \nabla^2 f(x) v = 0$ if and only if $v^T c = 0$, which completes the proof.
\end{proof}
Employing the Lemmas we can prove Proposition \ref{prop:prl_concave}.

\begin{proof}
From \eqref{eqn:log_prl_objective} we have 
\begin{align*}
    \ell_{prl}(\beta) = &\binom{n}{2}^{-1} \sum_{i < j} \mathds{1}(Y_j > Y_i) \log \Phi \left(\frac{(X_j - X_i)^T \beta}{\sqrt{2}}  \right) \\
    &+ \mathds{1}(Y_j \le Y_i) \log \Phi \left(\frac{(X_i - X_j)^T \beta}{\sqrt{2}}  \right).
\end{align*}
Thus, with Lemma \ref{lem:concave_log_gaussian_func} we know that $\ell_{prl}(\beta)$ is a sum of concave functions. Since sums preserve the concavity $\ell_{prl}(\beta)$ is concave. 

We show strict concavity by contradiction and thus assume that $\ell_{prl}(\beta)$ is not strictly concave. This implies that there exists a vector $v$ such that $v^T \nabla^2 \ell_{prl}(\beta) v = 0$ for the Hessian matrix $\nabla^2 \ell_{prl}(\beta)$ of $\ell_{prl}(\beta)$. The Hessian operator is linear, thus, $\nabla^2 \ell_{prl}(\beta)$ is a sum of Hessians, that is
\begin{align*}
    \nabla^2 \ell_{prl}(\beta) &= \binom{n}{2}^{-1} \sum_{i < j} \mathds{1}(Y_j > Y_i) \nabla^2 \log \Phi \left(\frac{(X_j - X_i)^T \beta}{\sqrt{2}}  \right) \\
    &+ \mathds{1}(Y_j \le Y_i) \nabla^2 \log \Phi \left(\frac{(X_i - X_j)^T \beta}{\sqrt{2}}  \right).
\end{align*}
Lemma \ref{lem:concave_log_gaussian_func} gives that 
$v^T \nabla^2 \log \Phi \left(\frac{(X_j - X_i)^T \beta}{\sqrt{2}}  \right)  v = 0$ 
if and only if 
$v^T (X_j - X_i) = 0$. Since for each $i, j$, either one of 
$\mathds{1}(Y_j > Y_i)$ or $\mathds{1}(Y_j \le Y_i)$ is 1, $v^T \nabla^2 \ell_{prl}(\beta) v = 0$ 
implies that $v^T (X_j - X_i) = 0$ for all $i$ and $j$. Therefore, $X_j^T v=c$ for a constant $c \in \mathbb{R}$ for all $j$.
Let $X^{(i)}=(X^{(i)}_1, \dots, X^{(i)}_n)$ be the sample vector of the $i-$th component of $X$ and define the matrix $\X := [\mathbf{1}, X^{(1)}, \dots, X^{(m)}] \in \mathbb{R}^{n \times m+1}$. Then $\X$ does not have full column rank, i.e. taking $u = (-c, v^T)^T$ implies $\X u = \mathbf{0}$. 

However, if we take any arbitrary square sub-matrix in $\X$ and compute the determinant, we obtain a non-zero polynomial of some $X^{(1)}, \dots, X^{(m)}$. The Lemma in \citep{okamoto1973} states that such a polynomial is zero only on the Lebesgue measure zero. Therefore, the rank of $\X$ is $\min\{n, m+1 \} = m+1$, which contradicts the equality $\X u = 0$ and, thus, completes the proof.
\end{proof}

\section{PROOF OF THEOREM \ref{thm:asm_prl}}
\label{app:theorem_asm_prl}

To keep the formulas readable, we define $U_{ij} := \frac{X_i - X_j}{\sqrt{2}}$. This gives 
\begin{equation*}
    \ell_{prl}(\beta) = \binom{n}{2}^{-1} \sum_{i < j} \mathds{1}(Y_j > Y_i) \log \Phi \left(U_{ji}^T \beta \right) 
    + \mathds{1}(Y_j \le Y_i) \log \Phi \left(U_{ij}^T \beta  \right).
\end{equation*}
For the proof, we use the Taylor expansion of $ \ell_{prl}(\hat{\beta}_{PRL})$ around $\beta_0$ and use properties of the gradient of $\ell_{prl}(\beta)$ at $\beta_0$, which are established in the following Lemmas.

\begin{lemma}
\label{lemma:grad_prl_lik_zero_as}
The gradient $\nabla_{\beta} \ell_{prl}(\beta_0)$ converges to zero almost surely, that is
\begin{equation*}
    \nabla_{\beta} \ell_{prl}(\beta_0) \overset{a.s.}{\to} 0.
\end{equation*}
\end{lemma}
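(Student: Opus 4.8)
The plan is to recognize $\nabla_\beta \ell_{prl}(\beta_0)$ as a $U$-statistic of order two and apply a strong law of large numbers. Differentiating \eqref{eqn:log_prl_objective} term by term (the sum is finite for each $n$, so no interchange of limit and derivative is needed) and writing $r(t) := \phi(t)/\Phi(t)$, the gradient becomes
\begin{equation*}
    \nabla_\beta \ell_{prl}(\beta_0) = \binom{n}{2}^{-1} \sum_{i<j} k(Z_i, Z_j),
\end{equation*}
where $Z_i := (X_i, Y_i)$ and the kernel is
\begin{equation*}
    k(Z_i, Z_j) := \mathds{1}\{Y_j > Y_i\}\, r(U_{ji}^T\beta_0)\, U_{ji} + \mathds{1}\{Y_j \le Y_i\}\, r(U_{ij}^T\beta_0)\, U_{ij}.
\end{equation*}
Since $Y$ has a continuous distribution, ties occur with probability zero, so $k$ is symmetric in its two arguments almost surely and this is a genuine $U$-statistic.

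First I would verify integrability of the kernel. Using the standard bound $r(t) \le C(1 + |t|)$ together with $U_{ij} = -U_{ji}$ and $\norm{U_{ij}} = \norm{X_i - X_j}/\sqrt{2}$, one obtains $\norm{k(Z_i,Z_j)} \le C'(\norm{X_i - X_j} + \norm{X_i - X_j}^2)$. Hence $\mathbb{E}\,\norm{k(Z_1,Z_2)} < \infty$ as soon as $X$ has a finite second moment, which is guaranteed by the moment conditions on $X$ (Appendix \ref{app:assumptions}, \textbf{AG1}). Hoeffding's strong law of large numbers for $U$-statistics then yields
\begin{equation*}
    \nabla_\beta \ell_{prl}(\beta_0) \overset{a.s.}{\to} \mathbb{E}[k(Z_1, Z_2)].
\end{equation*}

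The crux of the argument is to show that this limiting expectation vanishes, and I would establish this by conditioning on $(X_i, X_j)$. The model identity derived in Subsection \ref{subsection:rankg} gives $\mathbb{P}(Y_j > Y_i \mid X_i, X_j) = \Phi(U_{ji}^T\beta_0)$ and $\mathbb{P}(Y_j \le Y_i \mid X_i, X_j) = \Phi(U_{ij}^T\beta_0)$. Since the factors $r(U_{ji}^T\beta_0)U_{ji}$ and $r(U_{ij}^T\beta_0)U_{ij}$ are $\sigma(X_i,X_j)$-measurable, taking the conditional expectation of each indicator cancels the $\Phi$ in the denominator of $r$, leaving
\begin{equation*}
    \mathbb{E}[k(Z_i,Z_j) \mid X_i, X_j] = \phi(U_{ji}^T\beta_0)\, U_{ji} + \phi(U_{ij}^T\beta_0)\, U_{ij}.
\end{equation*}
Because $\phi$ is even and $U_{ij} = -U_{ji}$, the two terms are equal in magnitude and opposite in sign, so this conditional expectation is identically zero; taking expectations gives $\mathbb{E}[k(Z_1,Z_2)] = 0$ and the lemma follows. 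The only genuinely delicate point is the integrability bound in the second step---controlling $r(t)$ near $-\infty$, where $r(t) \sim |t|$---whereas the vanishing of the mean is an exact cancellation that requires no further regularity.
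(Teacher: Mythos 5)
Your proof follows essentially the same route as the paper's: write $\nabla_\beta \ell_{prl}(\beta_0)$ as a two-sample $U$-statistic, invoke the strong law of large numbers for $U$-statistics (the paper cites Theorem A in Section 5.4 of Serfling), and show the kernel has mean zero via the tower rule, where conditioning on $(X_i,X_j)$ cancels the $\Phi$ in the denominator and the evenness of $\phi$ together with $U_{ij}=-U_{ji}$ gives exact cancellation. The only (minor) difference is in the integrability check: you bound the Mills ratio $\phi/\Phi$ before conditioning, which requires a second moment on $X$, while the paper conditions first so that the $\Phi$'s cancel and only a first moment is implicitly needed --- your version is actually the more explicit of the two, since the paper's justification (boundedness of $\phi$ alone) silently relies on such an unstated moment condition.
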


\begin{proof}
The gradient
\begin{align*}
    \nabla_{\beta} \ell_{prl}(\beta_0) &= \binom{n}{2}^{-1} \sum_{i < j} \mathds{1}(Y_j > Y_i) \frac{\phi \left( U_{ji}^T \beta_0 \right) }{\Phi \left(U_{ji}^T \beta_0 \right) } \cdot U_{ji} 
    + \mathds{1}(Y_j \le Y_i) \frac{\phi \left(U_{ij}^T \beta_0 \right)}{\Phi \left(U_{ij}^T \beta_0 \right)} \cdot U_{ij},
\end{align*}
is a U-statistic with kernel
\begin{equation*}
    \psi((X_1, Y_1), (X_2, Y_2)) := \mathds{1}(Y_2 > Y_1) \frac{\phi \left(U_{21}^T \beta_0  \right) }{\Phi \left(U_{21}^T \beta_0  \right) } \cdot U_{21}
    + \mathds{1}(Y_2 \le Y_1) \frac{\phi \left(U_{12}^T \beta_0 \right)}{\Phi \left(U_{12}^T \beta_0 \right)} \cdot U_{12}.
\end{equation*}
Note that $\psi((X_1, Y_1), (X_2, Y_2))$ is symmetric as there are no ties in $Y_i$'s ($Y$ has a continuous distribution).

We use the generalization of the classical Strong Law of Large Numbers (SLLN) for U-statistics (e.g. Theorem A in  \citep{Serfling_1980} Section 5.4). The expectation of the kernel $\psi$ is 
\begin{align*}
    \mathbb{E} [&\psi((X_1, Y_1), (X_2, Y_2))]\\
    &= 
     \mathbb{E} \left[ \mathds{1}(Y_2 > Y_2) \frac{\phi \left(U_{21}^T \beta_0  \right) }{\Phi \left(U_{21}^T \beta_0  \right) } \cdot U_{21} + \mathds{1}(Y_j \le Y_i) \frac{\phi \left(U_{12}^T \beta_0 \right)}{\Phi \left(U_{12}^T \beta_0 \right)} \cdot U_{12}  \right] \\
    &=\mathbb{E}_{X} \left[\mathbb{E}_Y \left[ \mathds{1}(Y_j > Y_i) | X \right] \frac{\phi \left(U_{21}^T \beta_0  \right) }{\Phi \left(U_{21}^T \beta_0  \right) } \cdot U_{21} +  \mathbb{E}_Y \left[ \mathds{1}(Y_j \le Y_i) | X \right] \frac{\phi \left(U_{12}^T \beta_0 \right)}{\Phi \left(U_{12}^T \beta_0 \right)} \cdot U_{12}  \right]  \\
    &= \mathbb{E}_{X} \left[ \mathbb{P} (Y_j > Y_i | X) \frac{\phi \left(U_{21}^T \beta_0  \right) }{\Phi \left(U_{21}^T \beta_0  \right) } \cdot U_{21} + \mathbb{P} (Y_j \le Y_i | X) \frac{\phi \left(U_{12}^T \beta_0 \right)}{\Phi \left(U_{12}^T \beta_0 \right)} \cdot U_{12}  \right]  \\
    &= \mathbb{E}_{X} \left[\Phi \left(U_{21}^T \beta_0  \right) \frac{\phi \left(U_{21}^T \beta_0  \right) }{\Phi \left(U_{21}^T \beta_0  \right) } \cdot U_{21} + \Phi \left(U_{12}^T \beta_0 \right) \frac{\phi \left(U_{12}^T \beta_0 \right)}{\Phi \left(U_{12}^T \beta_0 \right)} \cdot U_{12}  \right]  \\
    &= \mathbb{E}_{X} \left[\phi \left(U_{21}^T \beta_0  \right) \cdot U_{21} + \phi \left(U_{12}^T \beta_0 \right) \cdot U_{12}  \right] = \mathbb{E}_{X} \left[ \phi \left(U_{21}^T \beta_0  \right) \cdot \left[ U_{21}  + U_{12} \right] \right]  \\
    &=  \mathbb{E}_{X} \left[ \phi \left(U_{21}^T \beta_0  \right) \cdot \left[ \frac{X_j - X_i}{\sqrt{2}}  + \frac{X_i - X_j}{\sqrt{2}} \right] \right] = \mathbb{E}_X[0] = 0,
\end{align*}
where the first equalities follow from the linearity of the expectation and the tower rule of the expectation, i.e. $\mathbb{E} [Q(X, Y)] = \mathbb{E} [\mathbb{E} [Q(X, Y) | X]]$ for any function $Q$. The third equality is a classical porperty of the indicator function. The fourth equality uses the monotonicity of the function $h$. The fifth step is just a cancellation of the equal members in the fractions. Finally, the sixth equality follows from the fact that $\phi(x) = \phi(-x)$ for the probability density function $\phi$ of the standard normal distribution.

We show that the absolute value of the kernel $\psi$ has a finite expectation, since
\begin{align*}
    \mathbb{E} [&|\psi((X_1, Y_1), (X_2, Y_2))|]\\
    &\leq  \mathbb{E} \left[ \mathds{1}(Y_2 > Y_2) \frac{\phi \left(U_{21}^T \beta_0  \right) }{\Phi \left(U_{21}^T \beta_0  \right) } \cdot |U_{21}| + \mathds{1}(Y_j \le Y_i) \frac{\phi \left(U_{12}^T \beta_0 \right)}{\Phi \left(U_{12}^T \beta_0 \right)} \cdot |U_{12}|  \right] \\
    &= \mathbb{E}_{X} \left[\Phi \left(U_{21}^T \beta_0  \right) \frac{\phi \left(U_{21}^T \beta_0  \right) }{\Phi \left(U_{21}^T \beta_0  \right) } \cdot |U_{21}| + \Phi \left(U_{12}^T \beta_0 \right) \frac{\phi \left(U_{12}^T \beta_0 \right)}{\Phi \left(U_{12}^T \beta_0 \right)} \cdot |U_{12}|  \right]  \\
    &=\phi \left(U_{12}^T \beta_0 \right) \cdot \frac{2\cdot|X_i - X_j|}{\sqrt{2}} < \infty,
\end{align*}
where we used the triangle inequality for the absolute value in the first step and the other steps are similar to the calculation for the expectation of the kernel $\psi$. The last quantity is finite since the probability density function of the standard normal distribution is bounded. 

Thus, the SLLN yields
\begin{equation*}
    \nabla_{\beta} \ell_{prl}(\beta_0) \overset{a.s.}{\to} 0,
\end{equation*}
which completes the proof of the Lemma.
\end{proof}

\begin{lemma}
\label{lemma:local_max_prl}
Let $a>0$. For all $\beta \in \mathbb{R}^m$, such that $\norm{\beta - \beta_0}_2 = a$, and $n > m$ we have
\begin{equation*}
    \mathbb{P}(\ell_{prl}(\beta) < \ell_{prl}(\beta_0) ) \to 1.
\end{equation*}
\end{lemma}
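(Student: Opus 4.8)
The plan is to reduce the finite-sample comparison to a comparison of the corresponding \emph{population} objectives, exploiting that $\ell_{prl}$ is a $U$-statistic. First I would fix $\beta$ with $\norm{\beta-\beta_0}_2 = a$ and apply the strong law of large numbers for $U$-statistics (exactly as in Lemma \ref{lemma:grad_prl_lik_zero_as}) to both $\ell_{prl}(\beta)$ and $\ell_{prl}(\beta_0)$. The required integrability of the kernel is routine: since $\log\Phi(t)\to 0$ as $t\to+\infty$ and $\log\Phi(t)=O(t^2)$ as $t\to-\infty$, one has $|\log\Phi(U_{ij}^T\beta)| \le C\,(1+\norm{X_i-X_j}_2^2)$ for a constant $C=C(\beta)$, which is integrable under the assumed moment conditions on $X$. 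Hence $\ell_{prl}(\beta)\overset{a.s.}{\to}\ell(\beta):=\mathbb{E}[\,\text{kernel}(\beta)\,]$ and likewise at $\beta_0$, so it suffices to prove $\ell(\beta)<\ell(\beta_0)$.

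To establish that the population objective $\ell$ has a \emph{unique} maximizer at $\beta_0$, I would verify three properties. Concavity of $\ell$ is inherited from the pointwise concavity of $\log\Phi(c^Tx)$ proved in Lemma \ref{lem:concave_log_gaussian_func}, since expectation preserves concavity. The stationarity $\nabla\ell(\beta_0)=0$ is precisely the computation carried out in the proof of Lemma \ref{lemma:grad_prl_lik_zero_as}, where the expected gradient kernel is shown to vanish using $\mathbb{E}[\mathds{1}(Y_2>Y_1)\mid X]=\Phi(U_{21}^T\beta_0)$ together with the symmetry $\phi(t)=\phi(-t)$. Finally, I would show that $\ell$ is \emph{strictly} concave: differentiating under the expectation, the population Hessian in a direction $v$ is an expectation of terms of the form $\tfrac{\phi'\Phi-\phi^2}{\Phi^2}\,(v^TU_{21})^2$ weighted by positive conditional probabilities, and this is strictly negative by Lemma \ref{lem:gaussian_func_negative} whenever $\mathbb{P}(v^T(X_1-X_2)\neq 0)>0$. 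That non-degeneracy holds for every $v\neq 0$ because $X$ is not supported on a proper affine subspace, the same condition imported via \citep{okamoto1973} in Proposition \ref{prop:prl_concave}. Strict concavity combined with $\nabla\ell(\beta_0)=0$ forces $\beta_0$ to be the unique global maximizer, so $\ell(\beta)<\ell(\beta_0)$ for $\norm{\beta-\beta_0}_2=a>0$.

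Putting the two steps together, $\ell_{prl}(\beta)-\ell_{prl}(\beta_0)\overset{a.s.}{\to}\ell(\beta)-\ell(\beta_0)<0$, which in particular gives convergence in probability and hence $\mathbb{P}(\ell_{prl}(\beta)<\ell_{prl}(\beta_0))\to 1$, as claimed. An equivalent finite-$n$ route is to Taylor-expand $\ell_{prl}(\beta)$ about $\beta_0$, annihilate the linear term with Lemma \ref{lemma:grad_prl_lik_zero_as} since $\norm{\beta-\beta_0}_2=a$ is fixed, and bound the quadratic remainder using a uniform strictly negative upper bound on $v^T\nabla^2\ell_{prl}(\cdot)v$ along the segment $[\beta_0,\beta]$; both routes rest on the same curvature estimate.

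I expect the main obstacle to be the strict-concavity step, and specifically ruling out degenerate directions $v\neq 0$ for which $v^T(X_1-X_2)=0$ almost surely. This non-degeneracy is exactly what prevents the strictly negative curvature supplied by Lemma \ref{lem:gaussian_func_negative} from being washed out, and it has to be imported from the distributional assumption that the support of $X$ is not contained in a lower-dimensional affine subspace, mirroring the full-rank argument used in Proposition \ref{prop:prl_concave}. A secondary technical point is the interchange of differentiation and expectation for the population Hessian, which is justified by the boundedness of the scalar factor $\tfrac{\phi'\Phi-\phi^2}{\Phi^2}$ and domination by $\norm{X_i-X_j}_2^2$.
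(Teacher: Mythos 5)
Your proposal is correct, but its main argument takes a genuinely different route from the paper's. The paper stays entirely at the finite-sample level: it Taylor-expands $\ell_{prl}(\beta)$ around $\beta_0$, kills the linear term with Lemma \ref{lemma:grad_prl_lik_zero_as}, and bounds the quadratic term by $\tfrac{\lambda_{max}}{2}a^2$, where $\lambda_{max}<0$ is the largest eigenvalue of the \emph{sample} Hessian over the closed ball, its negativity supplied by the finite-sample strict concavity of Proposition \ref{prop:prl_concave}; this is exactly the ``equivalent finite-$n$ route'' you sketch at the end. Your primary argument instead passes to the population objective $\ell(\beta)$: SLLN for the objective U-statistic itself, stationarity of $\ell$ at $\beta_0$, and strict concavity of $\ell$ via Lemma \ref{lem:gaussian_func_negative} together with the non-degeneracy of the distribution of $X$. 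Your route buys two things: the curvature constant $v^T\nabla^2\ell(\beta_0)v<0$ is deterministic, so you sidestep the delicate point in the paper's proof that $\lambda_{max}$ is random and $n$-dependent (the paper implicitly assumes it stays bounded away from zero as $n\to\infty$, which is not argued); and you never need the hypothesis $n>m$, while obtaining population identification of $\beta_0$ as a by-product, in the standard M-estimation template. The paper's route, in exchange, needs only the SLLN for the gradient (already proved) rather than for the objective, avoiding your extra integrability argument for the $\log\Phi$ kernel, and its bound is uniform over the sphere $\{\beta:\norm{\beta-\beta_0}_2=a\}$, since the eigenvalue bound holds on the whole ball and the linear term is controlled by $a\norm{\nabla_{\beta}\ell_{prl}(\beta_0)}_2$; that uniform form is what the proof of Theorem \ref{thm:asm_prl} actually invokes to place a maximizer inside the ball. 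Your pointwise statement matches the lemma as literally written, but to serve Theorem \ref{thm:asm_prl} you would need to upgrade it to uniformity over the sphere, e.g.\ via the standard fact that pointwise almost-sure convergence of concave functions implies uniform convergence on compact sets.
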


\begin{proof}
The taylor expansion of $\ell_{prl}(\beta)$ around $\beta_0$ gives
\begin{equation*}
    \ell_{prl}(\beta) - \ell_{prl}(\beta_0) = (\beta - \beta_0)^T \nabla_{\beta} \ell_{prl}(\beta_0) + \frac{1}{2}  (\beta - \beta_0)^T \nabla_{\beta}^2 \ell_{prl}(\beta^*)  (\beta - \beta_0),
\end{equation*}
where $\norm{\beta^* - \beta_0}_2 \leq \norm{\beta - \beta_0}_2 = a$, i.e., $\beta^*$ is in the closed ball around $\beta_0$ with radius $a$. Clearly, $\nabla_{\beta}^2 \ell_{prl}(\beta^*)$ is continuous with respect to $\beta^*$. Moreover, from Proposition \ref{prop:prl_concave} we know that the maximum eigenvalue of $\nabla_{\beta}^2 \ell_{prl}$ is negative. Thus, there exist a maximum eigenvalue $\lambda_{max}<0$ of $\nabla_{\beta}^2 \ell_{prl}$ within the closed ball with center $\beta_0$ and radius $a$.  

Therefore, we have
\begin{equation*}
    \frac{1}{2}  (\beta - \beta_0)^T \nabla_{\beta}^2 \ell_{prl}(\beta^*)  (\beta - \beta_0) \leq \frac{\lambda_{max}}{2} \norm{\beta - \beta_0}_2^2.
\end{equation*}
Using the previous Lemma \ref{lemma:grad_prl_lik_zero_as} the remaining term $(\beta - \beta_0)^T \nabla_{\beta} \ell_{prl}(\beta_0)$ asymptotically vanishes in probability and thus the claim follows.
\end{proof}

In the following we prove Theorem \ref{thm:asm_prl}., i.e. $\hat{\beta}_{prl} - \beta_0 = o_P(1)$. 

\begin{proof}
From Lemma \ref{lemma:local_max_prl} we know that for any fixed $a > 0$, there exists a maximum of $\ell_{prl}(\beta)$ within the closed ball around $\beta_0$ with radius $a$ with probability tends to 1. However, $\hat{\beta}_{prl}$ maximizes $\ell_{prl}(\beta)$, and thus
\begin{equation*}
    \mathbb{P} \left(\norm{\hat{\beta}_{prl} - \beta_0}_2 < a \right) \to 1 \text{ as } n \to \infty,
\end{equation*}
which completes the proof.
\end{proof}

\section{PROOF OF THEOREM \ref{thm:causal_order_consistency}}
\label{app:proof_of_consistency}

To prove the Theorem we employ the following Lemma.

\begin{lemma}
\label{lemma:sink_consistency}
Under the assumption ($\mathbf{A}$) we have 
\begin{itemize}
    \item If node $k$ is not a sink node, then
    \begin{equation*}
    t_k > \xi + o_P(1).
    \end{equation*}
    \item If node $k$ is a sink node, then
    \begin{equation*}
    t_k \to 0 \text{ as } n \to \infty.
\end{equation*}
\end{itemize}
\end{lemma}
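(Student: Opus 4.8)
The plan is to show that in both cases $t_k$ converges in probability to the \emph{population} HSIC of a limiting residual against $X^{(-k)}$, and then read off the two conclusions according to whether that limit is zero or bounded below by $\xi$. By the assumed consistency of the estimators, let $(h^*, \beta^*)$ denote the limits of $\hat{h}^{(k)}$ and $\hat{\beta}^{(k)}$, where $h^*$ is strictly increasing and continuous, and define the population residual $N := h^*(X^{(k)}) - (X^{(-k)})^T\beta^*$ together with its sample values $N_j := h^*(X_j^{(k)}) - (X_j^{(-k)})^T\beta^*$. Writing $\tilde{H}_k := HSIC(\{X_j^{(-k)}, N_j\}_{j=1}^n)$ for the empirical HSIC evaluated at the true residuals and $H_k := HSIC(\mathbb{P}^{N, X^{(-k)}})$ for its population analogue, I would decompose
\begin{equation*}
    t_k = \left( t_k - \tilde{H}_k \right) + \left( \tilde{H}_k - H_k \right) + H_k
\end{equation*}
and argue that the first two terms are $o_P(1)$.

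For the first term, the relevant quantity is the uniform estimation error $\Delta_n := \max_{1 \le j \le n} |\hat{\varepsilon}_j^{(k)} - N_j|$. From
\begin{equation*}
    \hat{\varepsilon}_j^{(k)} - N_j = \left( \hat{h}^{(k)}(X_j^{(k)}) - h^*(X_j^{(k)}) \right) - (X_j^{(-k)})^T(\hat{\beta}^{(k)} - \beta^*),
\end{equation*}
the first summand is controlled by the uniform consistency of $\hat{h}^{(k)}$ on the compact range of $X^{(k)}$, and the second by $\norm{\hat{\beta}^{(k)} - \beta^*}_2 = o_P(1)$ together with boundedness of $X$ (assumption \textbf{AS4}); hence $\Delta_n = o_P(1)$. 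Since the empirical HSIC is a smooth function of the two Gram matrices and the employed kernels (e.g.\ Gaussian) are bounded and Lipschitz, replacing each $N_j$ by $\hat{\varepsilon}_j^{(k)}$ perturbs every affected kernel entry by $O(\Delta_n)$ and hence perturbs the statistic by $O(\Delta_n) = o_P(1)$, which controls $t_k - \tilde{H}_k$.

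The second term $\tilde{H}_k - H_k$ vanishes by the law of large numbers for V-statistics: evaluated at the fixed residuals, the empirical HSIC converges in probability to its population value $H_k$. It remains to identify $H_k$ in the two cases. If $k$ is a sink node, then $X^{(-k)} = X^{(\textbf{ND}_k)}$ and, by the model assumptions, $N = \varepsilon^{(k)}$ is independent of $X^{(-k)}$; for a characteristic kernel this gives $H_k = 0$, so $t_k \to 0$. If $k$ is not a sink node, then $A := [1,m]\setminus\{k\}$ contains a child of $X^{(k)}$, so applying assumption \textbf{(A)} to the strictly increasing continuous $h^*$ and the vector $\beta^*$ yields $H_k > \xi$; combined with the two $o_P(1)$ terms this gives $t_k > \xi + o_P(1)$.

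The main obstacle is the first term: transferring consistency of the functional estimators into consistency of the HSIC statistic. This needs (i) a \emph{uniform} (over the sample) bound on the noise-estimation error, which is precisely why uniform consistency of $\hat{h}^{(k)}$ and boundedness of $X$ are assumed, and (ii) an explicit continuity/Lipschitz property of the empirical HSIC in its inputs, routine for bounded Lipschitz kernels but requiring a short argument. A secondary subtlety to record carefully is that, even when $k$ is not a sink node, the rank-based estimators still converge to a well-defined strictly increasing limit $h^*$ and a finite $\beta^*$, so that assumption \textbf{(A)} genuinely applies to the limiting residual $N$.
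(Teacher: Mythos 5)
Your high-level strategy (compare $t_k$ to a population HSIC and read off the two cases from assumption \textbf{(A)} versus independence) matches the paper's, but your execution differs in two places, and one of them is a genuine gap. In the non-sink case you need the estimators $\hat{h}^{(k)}, \hat{\beta}^{(k)}$ to converge to well-defined limits $(h^*, \beta^*)$, with $h^*$ strictly increasing and continuous, so that \textbf{(A)} can be applied to the limiting residual $N$. But when $k$ is not a sink node, the PNL regression of $X^{(k)}$ on $X^{(-k)}$ is misspecified ($X^{(-k)}$ contains children of $k$), and the consistency hypothesis of the theorem, as well as the estimation theory it rests on, only covers the correctly specified model; nothing guarantees that the rank-based estimators converge to anything at all in this case, let alone to a strictly increasing continuous limit. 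You flag this yourself as a ``secondary subtlety'' but do not resolve it, and the entire non-sink half of your argument rests on it. The paper avoids the issue completely: since \textbf{(A)} quantifies over \emph{all} strictly increasing continuous $h$ and all $\beta$, it can be applied directly to the random estimates $\hat{h}^{(k)}, \hat{\beta}^{(k)}$ themselves, giving $HSIC(\mathbb{P}^{X^{(-k)}, \hat{\varepsilon}}) > \xi$ for the residual built from whatever the estimated functions happen to be; the finite-sample deviation bound for empirical HSIC (Theorem 3 in \cite{HSIC_2005}) then yields $t_k > \xi + o_P(1)$ with no limiting $(h^*, \beta^*)$ ever needed.

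A second, smaller mismatch concerns the sink case. Your Lipschitz-transfer argument requires the uniform error $\Delta_n = \max_{1 \le j \le n} |\hat{\varepsilon}_j^{(k)} - \varepsilon_j^{(k)}| = o_P(1)$, i.e., consistency of $\hat{h}^{(k)}$ uniformly over the sample range of $X^{(k)}$ together with bounded covariates. That is strictly stronger than what the theorem assumes and what the cited estimation results deliver: for \texttt{RankG} the available consistency of $\hat{h}_G$ is pointwise (a.s.\ at continuity points) and $X$ is not assumed bounded, while the uniform result of \cite{Zhang_2013} holds only on a fixed compact interval $[y_1, y_2]$. The paper's proof is engineered around this limitation: it uses only pointwise consistency, $\hat{\varepsilon}_1^{(k)} - \varepsilon_1^{(k)} = o_P(1)$, and replaces your Lipschitz perturbation step by a direct V-statistic comparison of $t_k$ with $HSIC(\mathbb{P}^{X^{(-k)}, \varepsilon^{(k)}})$, decomposing $t_k = \hat{Q}_1 + \hat{Q}_2 - \hat{Q}_3$ and controlling each piece via Markov's inequality with variance and covariance bounds, boundedness of the Gaussian kernels (hence uniform integrability), and the continuous mapping theorem to pass $\hat{L}_{ij} \to L_{ij}$ inside expectations. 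Your route would go through under an added boundedness assumption of type \textbf{AS4} plus a uniform consistency hypothesis, but as written it proves the lemma under a stronger assumption set than the one stated.
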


\begin{proof}
First, assume that $k$ is not a sink node. Denoting
$\hat{\varepsilon} := \hat{h}(X^{(k)}) - \left(X^{(-k)}\right)^T \hat{\beta}^{(k)}$, assumption ($\mathbf{A}$) gives that $HSIC(\mathbb{P}^{X^{(-k)}, \hat{\varepsilon}}) > \xi$. On the other hand, Theorem 3 in \cite{HSIC_2005} gives that for all $\delta > 0$, with probability at least $1 - \delta$ we have 
\begin{equation*}
    |HSIC(\{ X_j^{(-k)}, \hat{\varepsilon}_j^{(k)} \}_{j=1}^n) - HSIC(\mathbb{P}^{X^{(-k)}, \hat{\varepsilon}})| \leq \sqrt{\frac{log 6/\delta}{\alpha^2 n }} + \frac{C}{n},
\end{equation*}
where $\alpha$ and $C$ are constants. Thus, 
\begin{align*}
    t_k - \xi & = HSIC(\{ X_j^{(-k)}, \hat{\varepsilon}_j^{(k)} \}_{j=1}^n) - \xi \\
    &> HSIC(\{ X_j^{(-k)}, \hat{\varepsilon}_j^{(k)} \}_{j=1}^n) - HSIC(\mathbb{P}^{X^{(-k)}, \hat{\varepsilon}}) = o_P(1).
\end{align*}
Second, assume that $k$ is a sink node. Then $\{X_j^{(k)}, X_j^{(-k)} \}_{j=1}^n$ is an i.i.d. sample from the PNL regression model \eqref{model:pnl_regression}. Using the (point-wise) consistency of the estimators, we have 
\begin{equation*}
    \hat{h}_k(X_1^{(k)}) - h(X_1^{(k)}) = o_P(1) \text{ and } \left(X_1^{(-k)}\right)^T \hat{\beta}^{(k)} - \left(X_1^{(-k)}\right)^T \beta^{(k)} = o_P(1).
\end{equation*}
Thus, we obtain
\begin{equation}
\label{eqn:noise_consistency}
    \hat{\varepsilon}^{(k)}_1 - \varepsilon^{(k)}_1 = \hat{h}_k(X_1^{(k)}) -  h(X_1^{(k)}) - \left(\left(X_1^{(-k)}\right)^T \hat{\beta}^{(k)} - \left(X^{(-k)}\right)^T \beta^{(k)} \right) = o_P(1) - o_P(1) = o_P(1).
\end{equation}
From the definition of $t_k$ and the decomposition of the HSIC, we have 
\begin{align*}
    t_k := HSIC(\{ X_j^{(-k)}, \hat{\varepsilon}_j^{(k)} \}_{j=1}^n) = \frac{1}{n^2} \sum_{i,j=1}^n K_{ij} \hat{L}_{ij} + \frac{1}{n^4} \sum_{i,j,q,r=1}^n K_{ij} \hat{L}_{ij} - \frac{2}{n^3} \sum_{i,j,q=1}^n K_{ij} \hat{L}_{ij} = \hat{Q}_1 + \hat{Q}_2 - \hat{Q}_3,
\end{align*}
where $$K_{ij} := \exp\left(-\norm{X_i^{(-k)} - X_j^{(-k)}}_2^2 \right), \quad \hat{L}_{ij} := \exp(-(\hat{\varepsilon}_i^{(k)} - \hat{\varepsilon}_j^{(k)})^2),$$ and 
$\hat{Q}_1 := \frac{1}{n^2} \sum_{i,j=1}^n K_{ij} \hat{L}_{ij}, \quad 
\hat{Q}_2 :=  \frac{1}{n^4} \sum_{i,j,q,r=1}^n K_{ij} \hat{L}_{ij}, ,\quad \hat{Q}_3 := \frac{2}{n^3} \sum_{i,j,q=1}^n K_{ij} \hat{L}_{ij}$.

Similar to the proof of Theorem 2 in \cite{hsic_proof_part} we will show that $t_k - HSIC(\mathbb{P}^{X^{(-k)}, \varepsilon^k}) = o_P(1)$. From Lemma 1 in \cite{HSIC_2005} we have 
\begin{equation*}
    HSIC(\mathbb{P}^{X^{(-k)}, \varepsilon^k}) = Q_1 + Q_2 - Q_3,
\end{equation*}
where  $L_{ij} := \exp(-(\varepsilon_i^{(k)} - \varepsilon_j^{(k)})^2), \quad 
Q_1 := \mathbb{E}_{X_1^{(-k)}, \varepsilon_1^{(k)}, X_2^{(-k)}, \varepsilon_2^{(k)}}[K_{12}L_{12}], \quad Q_2 := \mathbb{E}_{X_1^{(-k)}, X_2^{(-k)}}[K_{12}]\mathbb{E}_{\varepsilon_1^{(k)}, \varepsilon_2^{(k)}}[L_{12}]$ and $Q_3 :=  2 \mathbb{E}_{X_1^{(-k)}, \varepsilon_1^{(k)}}[\mathbb{E}_{X_1^{(-k)}}[K_{12}] \mathbb{E}_{\varepsilon_1^{(k)}}[L_{12}]]$.

We show that $\hat{Q}_1 - Q_1 = o_P(1)$. We have
\begin{align}
    \hat{Q}_1 - Q_1 &= \frac{1}{n^2} \sum_{i,j=1}^n K_{ij} \hat{L}_{ij} - \mathbb{E}[K_{12}L_{12}] = \frac{1}{n^2} \sum_{i\neq j}^n K_{ij} \hat{L}_{ij} - \mathbb{E}[K_{12}L_{12}] + \frac{1}{n}\nonumber \\
    & = \frac{1}{n(n-1)} \sum_{i \neq j}^n K_{ij} \hat{L}_{ij} - \mathbb{E}[K_{12}L_{12}] + \frac{1}{n} - \frac{1}{n^2(n-1)}\sum_{i \neq j}^n K_{ij} \hat{L}_{ij} \nonumber \\
    \label{eqn:q1_diff}
    &= \frac{1}{n(n-1)} \sum_{i \neq j}^n K_{ij} \hat{L}_{ij} -  \mathbb{E}[K_{12}\hat{L}_{12}] + \mathbb{E}[K_{12}\hat{L}_{12}] - \mathbb{E}[K_{12}L_{12}] + \frac{1}{n} - \frac{1}{n^2(n-1)}\sum_{i \neq j}^n K_{ij} \hat{L}_{ij},
\end{align}
where we used $K_{jj} = \hat{L}_{jj} = 1$ in the second equality.
For $\delta > 0$, the Markov inequality gives
\begin{align}
    \mathbb{P} & \left(\left| \frac{1}{n(n-1)} \sum_{i \neq j}^n K_{ij} \hat{L}_{ij} - \mathbb{E}[K_{12}\hat{L}_{12}] \right| \geq \delta \right) \leq \frac{\mathbb{E} \left[\left(  \frac{1}{n(n-1)} \sum_{i \neq j}^nK_{ij} \hat{L}_{ij} - \mathbb{E}[K_{12}\hat{L}_{12}] \right)^2 \right]}{\delta^2} \nonumber \\
    & = \frac{1}{n(n-1) \delta^2}  Var(K_{12} \hat{L}_{12}) + \frac{1}{n^2(n-1)^2 \delta^2} \sum_{i \neq j} \sum_{p \neq q} Cov(K_{ij}\hat{L}_{ij}, K_{pq}\hat{L}_{pq}) \nonumber \\
    \label{eqn:q1_diff_first_term}
    & = O(1)\frac{1}{n(n-1)} + O(1) \frac{1}{n} Cov(K_{12}\hat{L}_{12}, K_{13}\hat{L}_{13}) + O(1) Cov(K_{12}\hat{L}_{12}, K_{34}\hat{L}_{34}).
\end{align}
Here, we used that $K_{ij}$ and $\hat{L}_{ij}$ are bounded by 1, thus, their variances are bounded. The number of terms in the quadruple sum that have exactly three different indices are of order $n^3$, but the denominator is of order $n^4$, which leads to the second $O(1)$ term. The last $O(1)$ comes from the fact that the number of terms in the quadruple sum that have exactly four different indices are of order $n^4$.  

Using \eqref{eqn:noise_consistency} and the continuous mapping theorem we obtain 
\begin{equation*}
    K_{12}\hat{L}_{12} - K_{12}L_{12} = o_P(1),
\end{equation*}
and since $\hat{L}_{12}$ is bounded, it is uniformly integrable and we obtain
\begin{equation*}
    \mathbb{E}[K_{12}\hat{L}_{12}] - \mathbb{E}[K_{12}L_{12}] = o(1).
\end{equation*}
In a similar way, we obtain
\begin{equation*}
    \mathbb{E}[K_{12}\hat{L}_{12}K_{34}\hat{L}_{34}] - \mathbb{E}[K_{12}L_{12} K_{34}L_{34}] = o(1).
\end{equation*}
Using the fact that $K_{12}L_{12}$ is independent from $K_{34}L_{34}$ and employing the two equalities above , we have 
\begin{align*}
    Cov(K_{12}\hat{L}_{12}, K_{34}\hat{L}_{34}) &= \mathbb{E}[K_{12}\hat{L}_{12} K_{34}\hat{L}_{34}] - \mathbb{E}[K_{12}\hat{L}_{12}] \mathbb{E}[K_{34}\hat{L}_{34}] \\
    & = \mathbb{E}[K_{12}L_{12} K_{34}L_{34}] - \mathbb{E}[K_{12}L_{12}] \mathbb{E}[K_{34}L_{34}]
    + o_P(1) = o_P(1).
\end{align*}
Thus, all terms in \eqref{eqn:q1_diff_first_term} are $o_P(1)$. Moreover, from the above arguments we have $\mathbb{E}[K_{12}\hat{L}_{12}] - \mathbb{E}[K_{12}L_{12}] = o(1)$ and $\frac{1}{n^2(n-1)}\sum_{i \neq j}^n K_{ij} \hat{L}_{ij} = o(1)$ as $K_{ij} \hat{L}_{ij}$ is bounded and the denominator is of order $n^3$. So, in \eqref{eqn:q1_diff} all the terms are $o_P(1)$, which gives
\begin{equation*}
    \hat{Q}_1 - Q_1 = o_P(1).
\end{equation*}
In the same fashion, one can show that $\hat{Q}_2 - Q_2 = o_P(1)$ and $\hat{Q}_3 - Q_3 = o_P(1)$, which put together proves that $t_k - HSIC(\mathbb{P}^{X^{(-k)}, \varepsilon^k}) = o_P(1)$. Moreover, $HSIC(\mathbb{P}^{X^{(-k)}, \varepsilon^k}) = 0$, since $X^{(k)}$ is a sink node and thus the noise $\varepsilon^k$ is independent from the remaining nodes $X^{(-k)}$. Thus, 
\begin{equation*}
    t_k \to 0 \text{ as } n \to \infty \text{ if } k \text{ is a sink node},
\end{equation*}
which completes the proof of the Lemma.
\end{proof}
Now we can prove Theorem \ref{thm:causal_order_consistency}.

For any set $A \subseteq \{ 1, 2, \dots, m \}$ we denote the sub-graph of $\mathcal{G}^0$ over the nodes $A$ as $\mathcal{G}^0_A$.
Then
\begin{align*}
    \mathbb{P}(\hat{\pi} \in \Pi^0) &= \mathbb{P}(\forall k \in \{1, 2, \dots m \}: \hat{\pi}(k) \text{ is a sink node in } \mathcal{G}^0_{ \{\hat{\pi}(1), \dots, \hat{\pi}(k) \} })\\
    &=
    1 - \mathbb{P}(\exists k \in \{1, 2, \dots m \} : \hat{\pi}(k) \text{ is not a sink node in } \mathcal{G}^0_{\{\hat{\pi}(1), \dots, \hat{\pi}(k)\}}) \\
    & \geq 1 - \sum_{k=1}^m \mathbb{P}(\hat{\pi}(k) \text{ is not a sink node in } \mathcal{G}^0_{\{\hat{\pi}(1), \dots, \hat{\pi}(k)\}}).
\end{align*}
The proof of Theorem \ref{thm:causal_order_consistency} is completed if we show that 
\begin{equation}
\label{eqn:sink_consistency_all}
    \mathbb{P}(\hat{\pi}(k) \text{ is a sink node in } \mathcal{G}^0_{\{\hat{\pi}(1), \dots, \hat{\pi}(k)\}}) \to 1 \text{ as } n \to \infty \quad \forall k \in \{1, 2, \dots, m \},
\end{equation}
since this implies
\begin{equation*}
    \mathbb{P}(\hat{\pi}(k) \text{ is not a sink node in } \mathcal{G}^0_{\{\hat{\pi}(1), \dots, \hat{\pi}(k)\}}) \to 0 \text{ as } n \to \infty  \quad \forall k \in \{1, 2, \dots, m \}.
\end{equation*}
Using assumption  ($\mathbf{A}$) and the recursive construction of $\hat{\pi}$ it is enough to prove \eqref{eqn:sink_consistency_all} for $k = m$, that is 
\begin{equation*}
\label{eqn:sink_consistency}
    \mathbb{P}(\hat{\pi}(m) \text{ is a sink node in } \mathcal{G}^0) \to 1 \text{ as } n \to \infty.
\end{equation*}
By Lemma \ref{lemma:sink_consistency} we know that $t_k$ goes to zero in probability for sink nodes and is a least $\xi$ for other nodes. Thus, 
\begin{equation*}
    \mathbb{P}(\hat{\pi} (m) \text{ is a sink node  in } \mathcal{G}^0) = \mathbb{P}(\arg \min \; \{t_k\} \text{ is a sink node  in } \mathcal{G}^0) \to 1 \text{ as } n \to \infty.
\end{equation*}
This completes the proof of the Theorem.

\section{NUMERICAL RESULTS}
\label{app:numerical_results_tables}

In this Section we provide the concrete simulation results in tabular form and additionally provide the empirical standard deviations if the results. 

Tables \ref{tab:pnl_rankg_4_gaussian}, \ref{tab:pnl_rankg_4_evd}, \ref{tab:pnl_rankg_4_logis} and \ref{tab:pnl_rankg_4_beta_10_gaussian}, \ref{tab:pnl_rankg_4_beta_10_evd}, \ref{tab:pnl_rankg_4_beta_10_logis} show the results of \texttt{RankG}, \texttt{AbPNL} and \texttt{RESIT} methods for 4-dimensional graphs with strong and weak signal settings, for Gaussian, Gumbel and Logistic noises, respectively. Tables \ref{tab:pnl_rankg_7_gaussian}, \ref{tab:pnl_rankg_7_evd}, \ref{tab:pnl_rankg_7_logis} and \ref{tab:pnl_rankg_7_beta_10_gaussian}, \ref{tab:pnl_rankg_7_beta_10_evd}, \ref{tab:pnl_rankg_7_beta_10_logis} show the results for 7-dimensional graphs. In both cases the sample sizes are 100, 500, 1000, 1500, and 2000, respectively.

Moreover, Tables \ref{tab:pnl_ranks_4_gaussian}, \ref{tab:pnl_ranks_4_evd}, \ref{tab:pnl_ranks_4_logis} and \ref{tab:pnl_ranks_4_beta_10_gaussian}, \ref{tab:pnl_ranks_4_beta_10_evd}, \ref{tab:pnl_ranks_4_beta_10_logis} show the results of \texttt{RankS}, \texttt{RankG}, \texttt{AbPNL} and \texttt{RESIT} methods for 4-dimensional graphs with strong and weak signal settings, respectively.    Tables \ref{tab:pnl_ranks_4_g4_gaussian}, \ref{tab:pnl_ranks_4_g4_evd}, \ref{tab:pnl_ranks_4_g4_logis} and \ref{tab:pnl_ranks_4_beta_10_g4_gaussian}, \ref{tab:pnl_ranks_4_beta_10_g4_evd}, \ref{tab:pnl_ranks_4_beta_10_g4_logis} show the results of \texttt{RankS}, \texttt{RankG}, \texttt{AbPNL} and \texttt{RESIT} methods for 4-dimensional graphs where the function $g$ is polynomial of degree 4 with strong and weak signal settings, respectively. In both cases the sample sizes are 100, 150, 200, 250, and 300.

\begin{table*}[!ht]
    \centering
    \begin{tabular}{||c||c|c|c||}
    \hline
    -  &  \multicolumn{3}{c||}{Gaussian noise} \\
    \hline
    \hline
      -   &  RankG & AbPNL & RESIT \\
        \hline
100 & 2.51 $\pm$ 1.26 & 3.94 $\pm$ 1.09 & \textbf{2.4 $\pm$ 1.16}\\
500 & \textbf{1.86 $\pm$ 1.31} & 3.46 $\pm$ 1.08 & 2.41 $\pm$ 1.23\\
1000 & \textbf{1.61 $\pm$ 1.15} & 3.14 $\pm$ 1.14 & 2.43 $\pm$ 1.16\\
1500 & \textbf{1.66 $\pm$ 1.35} & 3.28 $\pm$ 1.18 & 2.62 $\pm$ 1.36\\
2000 & \textbf{1.63 $\pm$ 1.17} & 3.23 $\pm$ 1.12 & 3.15 $\pm$ 1.51\\
        \hline
    \end{tabular}
    \caption{Results of RankG, RESIT and AbPNL methods on 4 nodes with $\beta \sim U(-100, 100)$ for Gaussian noise ($100$ repetitions).}
    \label{tab:pnl_rankg_4_gaussian}
\end{table*}

\begin{table*}[!ht]
    \centering
    \begin{tabular}{||c||c|c|c||}
    \hline
    -  &  \multicolumn{3}{c||}{Gumbel noise} \\
    \hline
    \hline
      -   &  RankG & AbPNL & RESIT \\
        \hline
100 & 2.86 $\pm$ 1.36 & 3.98 $\pm$ 1.36 & \textbf{2.3 $\pm$ 1.18}\\
500 & \textbf{2.34 $\pm$ 1.45} & 3.07 $\pm$ 1.24 & 2.49 $\pm$ 1.2\\
1000 & \textbf{1.8 $\pm$ 1.28} & 3.31 $\pm$ 1.11 & 2.43 $\pm$ 1.28\\
1500 & \textbf{1.68 $\pm$ 1.1} & 3.06 $\pm$ 1.03 & 3.04 $\pm$ 1.31\\
2000 & \textbf{1.37 $\pm$ 1.14} & 3.17 $\pm$ 1.18 & 3.02 $\pm$ 1.41\\
        \hline
    \end{tabular}
    \caption{Results of RankG, RESIT and AbPNL methods on 4 nodes with $\beta \sim U(-100, 100)$ for Gumbel noise ($100$ repetitions).}
    \label{tab:pnl_rankg_4_evd}
\end{table*}

\begin{table*}[!ht]
    \centering
    \begin{tabular}{||c||c|c|c||}
    \hline
    -  &  \multicolumn{3}{c||}{Logistic noise} \\
    \hline
    \hline
      -   &  RankG & AbPNL & RESIT \\
        \hline
100 & 2.61 $\pm$ 1.35 & 3.89 $\pm$ 1.11 & \textbf{2.31 $\pm$ 1.1}\\
500 & \textbf{2.06 $\pm$ 1.34} & 3.32 $\pm$ 1.24 & 2.36 $\pm$ 1.13\\
1000 & \textbf{1.85 $\pm$ 1.37} & 3.41 $\pm$ 1.1 & 2.66 $\pm$ 1.24\\
1500 & \textbf{1.68 $\pm$ 1.38} & 3.27 $\pm$ 1.2 & 2.92 $\pm$ 1.32\\
2000 & \textbf{1.79 $\pm$ 1.18} & 3.29 $\pm$ 1.16 & 3.44 $\pm$ 1.38\\
        \hline
    \end{tabular}
    \caption{Results of RankG, RESIT and AbPNL methods on 4 nodes with $\beta \sim U(-100, 100)$ for Logistic noise ($100$ repetitions).}
    \label{tab:pnl_rankg_4_logis}
\end{table*}

\begin{table*}[!ht]
    \centering
    \begin{tabular}{||c||c|c|c||}
    \hline
    -  &  \multicolumn{3}{c||}{Gaussian noise} \\
    \hline
    \hline
      -   &  RankG & AbPNL & RESIT \\
        \hline
100 & \textbf{1.21 $\pm$ 1.23} & 3.7 $\pm$ 1.28 & 2.72 $\pm$ 1.2\\
500 & \textbf{0.21 $\pm$ 0.56} & 2.8 $\pm$ 1.01 & 2.69 $\pm$ 1.12\\
1000 & \textbf{0.15 $\pm$ 0.59} & 3.18 $\pm$ 1.08 & 2.73 $\pm$ 1.08\\
1500 & \textbf{0.04 $\pm$ 0.24} & 3.06 $\pm$ 1.08 & 2.9 $\pm$ 1.06\\
2000 & \textbf{0.05 $\pm$ 0.3} & 3.26 $\pm$ 1.17 & 3.1 $\pm$ 1.29\\
        \hline
    \end{tabular}
    \caption{Results of RankG, RESIT and AbPNL methods on 4 nodes with $\beta \sim U(-10, 10)$ for Gaussian noise ($100$ repetitions).}
    \label{tab:pnl_rankg_4_beta_10_gaussian}
\end{table*}

\begin{table*}[!ht]
    \centering
    \begin{tabular}{||c||c|c|c||}
    \hline
    -  &  \multicolumn{3}{c||}{Gumbel noise} \\
    \hline
    \hline
      -   &  RankG & AbPNL & RESIT \\
        \hline
100 & \textbf{1.44 $\pm$ 1.26} & 3.47 $\pm$ 1.13 & 2.86 $\pm$ 1.26\\
500 & \textbf{0.42 $\pm$ 0.91} & 2.61 $\pm$ 1.1 & 2.62 $\pm$ 1.19\\
1000 & \textbf{0.13 $\pm$ 0.39} & 2.88 $\pm$ 1.15 & 2.75 $\pm$ 1.15\\
1500 & \textbf{0.15 $\pm$ 0.54} & 2.76 $\pm$ 1.14 & 2.98 $\pm$ 1.2\\
2000 & \textbf{0.08 $\pm$ 0.37} & 2.83 $\pm$ 1.21 & 3.29 $\pm$ 1.26\\
        \hline
    \end{tabular}
    \caption{Results of RankG, RESIT and AbPNL methods on 4 nodes with $\beta \sim U(-10, 10)$ for Gumbel noise ($100$ repetitions).}
    \label{tab:pnl_rankg_4_beta_10_evd}
\end{table*}

\begin{table*}[!ht]
    \centering
    \begin{tabular}{||c||c|c|c||}
    \hline
    -  &  \multicolumn{3}{c||}{Logistic noise} \\
    \hline
    \hline
      -   &  RankG & AbPNL & RESIT \\
        \hline
100 & \textbf{1.05 $\pm$ 1.3} & 3.82 $\pm$ 1.15 & 2.5 $\pm$ 1.24\\
500 & \textbf{0.39 $\pm$ 0.79} & 2.84 $\pm$ 1.02 & 2.46 $\pm$ 1\\
1000 & \textbf{0.14 $\pm$ 0.47} & 3.01 $\pm$ 1.14 & 2.53 $\pm$ 1.11\\
1500 & \textbf{0 $\pm$ 0} & 2.95 $\pm$ 1.18 & 2.98 $\pm$ 1.05\\
2000 & \textbf{0 $\pm$ 0} & 2.69 $\pm$ 1.17 & 3.28 $\pm$ 1.21\\
        \hline
    \end{tabular}
    \caption{Results of RankG, RESIT and AbPNL methods on 4 nodes with $\beta \sim U(-10, 10)$ for Logistic noise ($100$ repetitions).}
    \label{tab:pnl_rankg_4_beta_10_logis}
\end{table*}


\begin{table*}[!ht]
    \centering
    \begin{tabular}{||c||c|c|c||}
    \hline
    -  &  \multicolumn{3}{c||}{Gaussian noise} \\
    \hline
    \hline
      -   &  RankG & AbPNL & RESIT \\
        \hline
100 & 4.65 $\pm$ 2.21 & 6.72 $\pm$ 2.13 & \textbf{3.89 $\pm$ 1.85}\\
500 & \textbf{3.75 $\pm$ 2.11} & 5.4 $\pm$ 1.88 & 4.1 $\pm$ 1.76\\
1000 & \textbf{3.29 $\pm$ 2.03} & 5.46 $\pm$ 1.96 & 4.27 $\pm$ 1.95\\
1500 & \textbf{2.94 $\pm$ 2.13} & 5.4 $\pm$ 1.87 & 4.34 $\pm$ 1.88\\
2000 & \textbf{3.35 $\pm$ 2.09} & 5.49 $\pm$ 1.84 & 5.02 $\pm$ 2.09\\
        \hline
    \end{tabular}
    \caption{Results of RankG, RESIT and AbPNL methods on 7 nodes with $\beta \sim U(-100, 100)$ for Gaussian noise ($100$ repetitions).}
    \label{tab:pnl_rankg_7_gaussian}
\end{table*}

\begin{table*}[!ht]
    \centering
    \begin{tabular}{||c||c|c|c||}
    \hline
    -  &  \multicolumn{3}{c||}{Gumbel noise} \\
    \hline
    \hline
      -   &  RankG & AbPNL & RESIT \\
        \hline
100 & 4.74 $\pm$ 2.12 & 6.33 $\pm$ 2.26 & \textbf{3.95 $\pm$ 2.09}\\
500 & \textbf{3.68 $\pm$ 1.97} & 5.41 $\pm$ 1.83 & 3.95 $\pm$ 1.75\\
1000 & \textbf{3.4 $\pm$ 1.98} & 5.39 $\pm$ 1.92 & 3.78 $\pm$ 1.92\\
1500 & \textbf{3.33 $\pm$ 2.05} & 5.96 $\pm$ 2.12 & 4.66 $\pm$ 1.99\\
2000 & \textbf{3.47 $\pm$ 2.61} & 5.26 $\pm$ 1.7 & 5.07 $\pm$ 2.41\\
        \hline
    \end{tabular}
    \caption{Results of RankG, RESIT and AbPNL methods on 7 nodes with $\beta \sim U(-100, 100)$ for Gumbel noise ($100$ repetitions).}
    \label{tab:pnl_rankg_7_evd}
\end{table*}

\begin{table*}[!ht]
    \centering
    \begin{tabular}{||c||c|c|c||}
    \hline
    -  &  \multicolumn{3}{c||}{Logistic noise} \\
    \hline
    \hline
      -   &  RankG & AbPNL & RESIT \\
        \hline
100 & 4.56 $\pm$ 2.1 & 6.45 $\pm$ 2.08 & \textbf{4.01 $\pm$ 1.96}\\
500 & \textbf{3.65 $\pm$ 2.13} & 5.52 $\pm$ 1.49 & 3.86 $\pm$ 1.85\\
1000 & \textbf{3.88 $\pm$ 2.42} & 5.35 $\pm$ 1.96 & 4.18 $\pm$ 1.88\\
1500 & \textbf{2.93 $\pm$ 1.98} & 5.37 $\pm$ 1.8 & 4.42 $\pm$ 2.01\\
2000 & \textbf{3.39 $\pm$ 1.97} & 5.1 $\pm$ 1.64 & 4.96 $\pm$ 2.17\\
        \hline
    \end{tabular}
    \caption{Results of RankG, RESIT and AbPNL methods on 7 nodes with $\beta \sim U(-100, 100)$ for Logistic noise($100$ repetitions).}
    \label{tab:pnl_rankg_7_logis}
\end{table*}

\begin{table*}[!ht]
    \centering
    \begin{tabular}{||c||c|c|c||}
    \hline
    -  &  \multicolumn{3}{c||}{Gaussian noise} \\
    \hline
    \hline
      -   &  RankG & AbPNL & RESIT \\
        \hline
100 & \textbf{2.44 $\pm$ 1.69} & 6.71 $\pm$ 2.47 & 4.15 $\pm$ 1.75\\
500 & \textbf{1.11 $\pm$ 1.46} & 5.06 $\pm$ 1.75 & 4.4 $\pm$ 1.84\\
1000 & \textbf{0.28 $\pm$ 0.75} & 5.08 $\pm$ 1.87 & 4.47 $\pm$ 1.88\\
1500 & \textbf{0.19 $\pm$ 0.6} & 5.18 $\pm$ 2 & 4.35 $\pm$ 1.79\\
2000 & \textbf{0.19 $\pm$ 0.69} & 5.26 $\pm$ 1.55 & 4.69 $\pm$ 2\\
        \hline
    \end{tabular}
    \caption{Results of RankG, RESIT and AbPNL methods on 7 nodes with $\beta \sim U(-10, 10)$ for Gaussian noise ($100$ repetitions).}
    \label{tab:pnl_rankg_7_beta_10_gaussian}
\end{table*}

\begin{table*}[!ht]
    \centering
    \begin{tabular}{||c||c|c|c||}
    \hline
    -  &  \multicolumn{3}{c||}{Gumbel noise} \\
    \hline
    \hline
      -   &  RankG & AbPNL & RESIT \\
        \hline
100 & \textbf{3.13 $\pm$ 2.02} & 6.11 $\pm$ 2.22 & 4.17 $\pm$ 1.76\\
500 & \textbf{0.88 $\pm$ 1.4} & 4.32 $\pm$ 1.65 & 4.41 $\pm$ 1.74\\
1000 & \textbf{0.38 $\pm$ 0.93} & 4.7 $\pm$ 1.87 & 4.22 $\pm$ 1.97\\
1500 & \textbf{0.37 $\pm$ 0.97} & 4.88 $\pm$ 1.78 & 4.88 $\pm$ 1.87\\
2000 & \textbf{0.14 $\pm$ 0.55} & 4.81 $\pm$ 1.94 & 5.02 $\pm$ 2.06\\
        \hline
    \end{tabular}
    \caption{Results of RankG, RESIT and AbPNL methods on 7 nodes with $\beta \sim U(-10, 10)$ for Gumbel noise  ($100$ repetitions).}
    \label{tab:pnl_rankg_7_beta_10_evd}
\end{table*}

\begin{table*}[!ht]
    \centering
    \begin{tabular}{||c||c|c|c||}
    \hline
    -  &  \multicolumn{3}{c||}{Logistic noise} \\
    \hline
    \hline
      -   &  RankG & AbPNL & RESIT \\
        \hline
100 & \textbf{2.65 $\pm$ 2} & 6.83 $\pm$ 2.2 & 4.03 $\pm$ 1.54\\
500 & \textbf{0.68 $\pm$ 1.09} & 4.87 $\pm$ 1.82 & 4.12 $\pm$ 1.77\\
1000 & \textbf{0.45 $\pm$ 1.01} & 5.08 $\pm$ 1.79 & 4.27 $\pm$ 1.8\\
1500 & \textbf{0.34 $\pm$ 0.98} & 4.89 $\pm$ 1.87 & 4.55 $\pm$ 1.76\\
2000 & \textbf{0.2 $\pm$ 0.68} & 4.43 $\pm$ 1.69 & 4.56 $\pm$ 1.81\\
        \hline
    \end{tabular}
    \caption{Results of RankG, RESIT and AbPNL methods on 7 nodes with $\beta \sim U(-10, 10)$ for Logistic noise ($100$ repetitions).}
    \label{tab:pnl_rankg_7_beta_10_logis}
\end{table*}






\begin{table*}[!ht]
    \centering
    \begin{tabular}{||c||c|c|c|c||}
    \hline
    -  &  \multicolumn{4}{c||}{Gaussian noise} \\
    \hline
    \hline
      -   &  RankS & RankG & AbPNL & RESIT \\
        \hline
100 & \textbf{2.23 $\pm$ 1.42} & 2.51 $\pm$ 1.26 & 3.94 $\pm$ 1.09 & 2.4 $\pm$ 1.16\\
150 & \textbf{1.73 $\pm$ 1.49} & 2.75 $\pm$ 1.25 & 2.6 $\pm$ 1.2 & 2.47 $\pm$ 1.13\\
200 & \textbf{1.63 $\pm$ 1.34} & 2.24 $\pm$ 1.32 & 2.87 $\pm$ 1.19 & 2.3 $\pm$ 1.02\\
250 & \textbf{1.56 $\pm$ 1.27} & 2.3 $\pm$ 1.16 & 3.07 $\pm$ 1.15 & 2.4 $\pm$ 1.22\\
300 & \textbf{1.49 $\pm$ 1.29} & 2.24 $\pm$ 1.35 & 2.97 $\pm$ 1.02 & 2.54 $\pm$ 1.1\\
        \hline
    \end{tabular}
    \caption{Results of RankS, RankG, RESIT and AbPNL methods on 4 nodes with $\beta \sim U(-100, 100)$ for Gaussian noise ($100$ repetitions).}
    \label{tab:pnl_ranks_4_gaussian}
\end{table*}

\begin{table*}[!ht]
    \centering
    \begin{tabular}{||c||c|c|c|c||}
    \hline
    -  &  \multicolumn{4}{c||}{Gumbel noise} \\
    \hline
    \hline
      -   &  RankS & RankG & AbPNL & RESIT \\
        \hline
100 & \textbf{1.69 $\pm$ 1.36} & 2.86 $\pm$ 1.36 & 3.98 $\pm$ 1.36 & 2.3 $\pm$ 1.18\\
150 & \textbf{1.76 $\pm$ 1.28} & 2.56 $\pm$ 1.4 & 2.88 $\pm$ 1.34 & 2.37 $\pm$ 1.1\\
200 & \textbf{1.77 $\pm$ 1.38} & 2.78 $\pm$ 1.3 & 2.77 $\pm$ 1.17 & 2.38 $\pm$ 1.25\\
250 & \textbf{1.73 $\pm$ 1.27} & 2.44 $\pm$ 1.18 & 3.04 $\pm$ 1.13 & 2.45 $\pm$ 1.16\\
300 & \textbf{1.84 $\pm$ 1.23} & 2.4 $\pm$ 1.43 & 3.11 $\pm$ 1.12 & 2.35 $\pm$ 1.1\\
        \hline
    \end{tabular}
    \caption{Results of RankS, RankG, RESIT and AbPNL methods on 4 nodes with $\beta \sim U(-100, 100)$ for Gumbel noise ($100$ repetitions).}
    \label{tab:pnl_ranks_4_evd}
\end{table*}

\begin{table*}[!ht]
    \centering
    \begin{tabular}{||c||c|c|c|c||}
    \hline
    -  &  \multicolumn{4}{c||}{Logistic noise} \\
    \hline
    \hline
      -   &  RankS & RankG & AbPNL & RESIT \\
        \hline
100 & \textbf{2.02 $\pm$ 1.32} & 2.61 $\pm$ 1.35 & 3.89 $\pm$ 1.11 & 2.31 $\pm$ 1.1\\
150 & \textbf{1.8 $\pm$ 1.26} & 2.58 $\pm$ 1.39 & 2.84 $\pm$ 1.26 & 2.7 $\pm$ 1.23\\
200 & \textbf{1.63 $\pm$ 1.28} & 2.5 $\pm$ 1.4 & 3.17 $\pm$ 1.25 & 2.55 $\pm$ 1.25\\
250 & \textbf{1.45 $\pm$ 1.23} & 2.21 $\pm$ 1.3 & 3.32 $\pm$ 1.23 & 2.3 $\pm$ 1.25\\
300 & \textbf{1.61 $\pm$ 1.32} & 2.3 $\pm$ 1.23 & 3 $\pm$ 1.04 & 2.4 $\pm$ 1.17\\
        \hline
    \end{tabular}
    \caption{Results of RankS, RankG, RESIT and AbPNL methods on 4 nodes with $\beta \sim U(-100, 100)$ for Logistic noise ($100$ repetitions).}
    \label{tab:pnl_ranks_4_logis}
\end{table*}

\begin{table*}[!ht]
    \centering
    \begin{tabular}{||c||c|c|c|c||}
    \hline
    -  &  \multicolumn{4}{c||}{Gaussian noise} \\
    \hline
    \hline
      -   &  RankS & RankG & AbPNL & RESIT \\
        \hline
100 & 1.72 $\pm$ 1.45 & \textbf{1.21 $\pm$ 1.23} & 3.7 $\pm$ 1.28 & 2.72 $\pm$ 1.2\\
150 & 1.54 $\pm$ 1.36 & \textbf{1.11 $\pm$ 1.48} & 2.3 $\pm$ 1.41 & 2.69 $\pm$ 1.2\\
200 & 1.37 $\pm$ 1.37 & \textbf{0.71 $\pm$ 1.09} & 2.52 $\pm$ 1.29 & 2.72 $\pm$ 1.05\\
250 & 1.37 $\pm$ 1.28 & \textbf{0.77 $\pm$ 1.14} & 2.55 $\pm$ 1.07 & 2.78 $\pm$ 1.22\\
300 & 1.21 $\pm$ 1.27 & \textbf{0.48 $\pm$ 0.89} & 3.1 $\pm$ 1.28 & 2.57 $\pm$ 1.17\\
        \hline
    \end{tabular}
    \caption{Results of RankS, RankG, RESIT and AbPNL methods on 4 nodes with $\beta \sim U(-10, 10)$ for Gaussian noise ($100$ repetitions).}
    \label{tab:pnl_ranks_4_beta_10_gaussian}
\end{table*}

\begin{table*}[!ht]
    \centering
    \begin{tabular}{||c||c|c|c|c||}
    \hline
    -  &  \multicolumn{4}{c||}{Gumbel noise} \\
    \hline
    \hline
      -   &  RankS & RankG & AbPNL & RESIT \\
        \hline
100 & 1.73 $\pm$ 1.43 & \textbf{1.44 $\pm$ 1.26} & 3.47 $\pm$ 1.13 & 2.86 $\pm$ 1.26\\
150 & 1.63 $\pm$ 1.37 & \textbf{1.22 $\pm$ 1.33} & 2.04 $\pm$ 1.19 & 2.64 $\pm$ 1.28\\
200 & 1.41 $\pm$ 1.31 & \textbf{0.81 $\pm$ 1.02} & 2.5 $\pm$ 1.08 & 2.7 $\pm$ 1.32\\
250 & 1.48 $\pm$ 1.26 & \textbf{0.62 $\pm$ 1.08} & 2.38 $\pm$ 1.25 & 2.58 $\pm$ 1.24\\
300 & 1.72 $\pm$ 1.42 & \textbf{0.69 $\pm$ 1.05} & 2.44 $\pm$ 1.19 & 2.76 $\pm$ 1.12\\
        \hline
    \end{tabular}
    \caption{Results of RankS, RankG, RESIT and AbPNL methods on 4 nodes with $\beta \sim U(-10, 10)$ for Gumbel noise ($100$ repetitions).}
    \label{tab:pnl_ranks_4_beta_10_evd}
\end{table*}

\begin{table*}[!ht]
    \centering
    \begin{tabular}{||c||c|c|c|c||}
    \hline
    -  &  \multicolumn{4}{c||}{Logistic noise} \\
    \hline
    \hline
      -   &  RankS & RankG & AbPNL & RESIT \\
        \hline
100 & 1.78 $\pm$ 1.4 & \textbf{1.05 $\pm$ 1.3} & 3.82 $\pm$ 1.15 & 2.5 $\pm$ 1.24\\
150 & 1.44 $\pm$ 1.34 & \textbf{0.74 $\pm$ 1.02} & 2.86 $\pm$ 1.25 & 2.56 $\pm$ 1.16\\
200 & 1.55 $\pm$ 1.27 & \textbf{0.57 $\pm$ 1.02} & 3.18 $\pm$ 1.24 & 2.37 $\pm$ 1.17\\
250 & 1.3 $\pm$ 1.14 & \textbf{0.36 $\pm$ 0.72} & 3.14 $\pm$ 1.25 & 2.28 $\pm$ 1.09\\
300 & 1.38 $\pm$ 1.43 & \textbf{0.33 $\pm$ 0.75} & 3.23 $\pm$ 1.15 & 2.51 $\pm$ 1.11\\
        \hline
    \end{tabular}
    \caption{Results of RankS, RankG, RESIT and AbPNL methods on 4 nodes with $\beta \sim U(-10, 10)$ for Logistic noise ($100$ repetitions).}
    \label{tab:pnl_ranks_4_beta_10_logis}
\end{table*}


\begin{table*}[!ht]
    \centering
    \begin{tabular}{||c||c|c|c|c||}
    \hline
    -  &  \multicolumn{4}{c||}{Gaussian noise} \\
    \hline
    \hline
      -   &  RankS & RankG & AbPNL & RESIT \\
        \hline
100 & \textbf{1.24 $\pm$ 0.92} & 3.33 $\pm$ 1.41 & 3.7 $\pm$ 1.32 & 2.75 $\pm$ 1.31\\
150 & \textbf{1.42 $\pm$ 1.07} & 2.96 $\pm$ 1.5 & 3.1 $\pm$ 1.49 & 2.56 $\pm$ 1.21\\
200 & \textbf{1.37 $\pm$ 1.1} & 3.26 $\pm$ 1.54 & 3.37 $\pm$ 1.34 & 2.42 $\pm$ 1.32\\
250 & \textbf{1.78 $\pm$ 1.18} & 2.8 $\pm$ 1.55 & 3.35 $\pm$ 1.23 & 2.33 $\pm$ 1.12\\
300 & \textbf{1.63 $\pm$ 1} & 3.32 $\pm$ 1.59 & 3.52 $\pm$ 1.23 & 2.64 $\pm$ 1.31\\
        \hline
    \end{tabular}
    \caption{Results of RankS, RankG, RESIT and AbPNL methods on 4 nodes with $\beta \sim U(-100, 100)$ for Gaussian noise and quartic polynomial $g$ ($100$ repetitions).}
    \label{tab:pnl_ranks_4_g4_gaussian}
\end{table*}

\begin{table*}[!ht]
    \centering
    \begin{tabular}{||c||c|c|c|c||}
    \hline
    -  &  \multicolumn{4}{c||}{Gumbel noise} \\
    \hline
    \hline
      -   &  RankS & RankG & AbPNL & RESIT \\
        \hline
100 & \textbf{1.11 $\pm$ 0.92} & 3 $\pm$ 1.37 & 3.33 $\pm$ 1.36 & 2.59 $\pm$ 1.23\\
150 & \textbf{1.51 $\pm$ 1.01} & 3.14 $\pm$ 1.37 & 2.7 $\pm$ 1.27 & 2.78 $\pm$ 1.2\\
200 & \textbf{1.56 $\pm$ 0.96} & 3.29 $\pm$ 1.41 & 3.22 $\pm$ 1.22 & 2.81 $\pm$ 1.26\\
250 & \textbf{1.63 $\pm$ 0.93} & 3.52 $\pm$ 1.41 & 3.34 $\pm$ 1.08 & 2.94 $\pm$ 1.29\\
300 & \textbf{1.86 $\pm$ 0.98} & 3.41 $\pm$ 1.44 & 3.61 $\pm$ 1.15 & 2.73 $\pm$ 1.25\\
        \hline
    \end{tabular}
    \caption{Results of RankS, RankG, RESIT and AbPNL methods on 4 nodes with $\beta \sim U(-100, 100)$ for Gumbel noise and quartic polynomial $g$ ($100$ repetitions).}
    \label{tab:pnl_ranks_4_g4_evd}
\end{table*}

\begin{table*}[!ht]
    \centering
    \begin{tabular}{||c||c|c|c|c||}
    \hline
    -  &  \multicolumn{4}{c||}{Logistic noise} \\
    \hline
    \hline
      -   &  RankS & RankG & AbPNL & RESIT \\
        \hline
100 & \textbf{2.02 $\pm$ 1.32} & 2.61 $\pm$ 1.35 & 3.52 $\pm$ 1.24 & 2.31 $\pm$ 1.1\\
150 & \textbf{1.8 $\pm$ 1.26} & 2.58 $\pm$ 1.39 & 3.23 $\pm$ 1.18 & 2.7 $\pm$ 1.23\\
200 & \textbf{1.63 $\pm$ 1.28} & 2.5 $\pm$ 1.4 & 3.47 $\pm$ 1.09 & 2.55 $\pm$ 1.25\\
250 & \textbf{1.45 $\pm$ 1.23} & 2.21 $\pm$ 1.3 & 3.35 $\pm$ 1.07 & 2.3 $\pm$ 1.25\\
300 & \textbf{1.61 $\pm$ 1.32} & 2.3 $\pm$ 1.23 & 3.45 $\pm$ 1.41 & 2.4 $\pm$ 1.17\\
        \hline
    \end{tabular}
    \caption{Results of RankS, RankG, RESIT and AbPNL methods on 4 nodes with $\beta \sim U(-100, 100)$ for Logistic noise and quartic polynomial $g$ ($100$ repetitions).}
    \label{tab:pnl_ranks_4_g4_logis}
\end{table*}

\begin{table*}[!ht]
    \centering
    \begin{tabular}{||c||c|c|c|c||}
    \hline
    -  &  \multicolumn{4}{c||}{Gaussian noise} \\
    \hline
    \hline
      -   &  RankS & RankG & AbPNL & RESIT \\
        \hline
100 & \textbf{2.01 $\pm$ 1.38} & 2.92 $\pm$ 1.66 & 3.64 $\pm$ 1.28 & 2.63 $\pm$ 1.17\\
150 & \textbf{2.34 $\pm$ 1.29} & 2.75 $\pm$ 1.72 & 2.64 $\pm$ 1.43 & 2.8 $\pm$ 1.29\\
200 & \textbf{2.44 $\pm$ 1.42} & 2.7 $\pm$ 1.62 & 3.19 $\pm$ 1.35 & 2.81 $\pm$ 1.32\\
250 & \textbf{2.53 $\pm$ 1.49} & 2.42 $\pm$ 1.65 & 3.57 $\pm$ 1.27 & 2.63 $\pm$ 1.24\\
300 & \textbf{2.48 $\pm$ 1.57} & 2.79 $\pm$ 1.8 & 3.7 $\pm$ 1.2 & 2.58 $\pm$ 1.34\\
        \hline
    \end{tabular}
    \caption{Results of RankS, RankG, RESIT and AbPNL methods on 4 nodes with $\beta \sim U(-10, 10)$ for Gaussian noise and quartic polynomial $g$  ($100$ repetitions).}
    \label{tab:pnl_ranks_4_beta_10_g4_gaussian}
\end{table*}

\begin{table*}[!ht]
    \centering
    \begin{tabular}{||c||c|c|c|c||}
    \hline
    -  &  \multicolumn{4}{c||}{Gumbel noise} \\
    \hline
    \hline
      -   &  RankS & RankG & AbPNL & RESIT \\
        \hline
100 & \textbf{1.85 $\pm$ 1.17} & 2.99 $\pm$ 1.49 & 3.55 $\pm$ 1.13 & 2.83 $\pm$ 1.23\\
150 & \textbf{2.15 $\pm$ 1.21} & 2.68 $\pm$ 1.56 & 2.54 $\pm$ 1.43 & 2.84 $\pm$ 1.39\\
200 & \textbf{2.43 $\pm$ 1.37} & 2.69 $\pm$ 1.59 & 2.71 $\pm$ 1.27 & 2.87 $\pm$ 1.3\\
250 & \textbf{2.51 $\pm$ 1.33} & 2.9 $\pm$ 1.61 & 3.2 $\pm$ 1.26 & 3.05 $\pm$ 1.45\\
300 & \textbf{2.32 $\pm$ 1.35} & 3.03 $\pm$ 1.38 & 3.1 $\pm$ 1.18 & 2.94 $\pm$ 1.32\\
        \hline
    \end{tabular}
    \caption{Results of RankS, RankG, RESIT and AbPNL methods on 4 nodes with $\beta \sim U(-10, 10)$ for Gumbel noise and quartic polynomial $g$  ($100$ repetitions).}
    \label{tab:pnl_ranks_4_beta_10_g4_evd}
\end{table*}

\begin{table*}[!ht]
    \centering
    \begin{tabular}{||c||c|c|c|c||}
    \hline
    -  &  \multicolumn{4}{c||}{Logistic noise} \\
    \hline
    \hline
      -   &  RankS & RankG & AbPNL & RESIT \\
        \hline
100 & \textbf{2.12 $\pm$ 1.17} & 2.97 $\pm$ 1.45 & 3.49 $\pm$ 1.36 & 2.61 $\pm$ 1.38\\
150 & \textbf{2.29 $\pm$ 1.39} & 2.93 $\pm$ 1.42 & 3.09 $\pm$ 1.39 & 2.63 $\pm$ 1.19\\
200 & \textbf{2.4 $\pm$ 1.36} & 2.91 $\pm$ 1.54 & 3.5 $\pm$ 1.17 & 2.45 $\pm$ 1.2\\
250 & \textbf{2.34 $\pm$ 1.34} & 2.81 $\pm$ 1.5 & 3.34 $\pm$ 1.23 & 2.53 $\pm$ 1.15\\
300 & \textbf{2.48 $\pm$ 1.44} & 2.76 $\pm$ 1.72 & 3.29 $\pm$ 1.19 & 2.59 $\pm$ 1.24\\
        \hline
    \end{tabular}
    \caption{Results of RankS, RankG, RESIT and AbPNL methods on 4 nodes with $\beta \sim U(-10, 10)$ for Logistic noise and quartic polynomial $g$  ($100$ repetitions).}
    \label{tab:pnl_ranks_4_beta_10_g4_logis}
\end{table*}

\vfill

\end{document}